\theoremstyle{plain}
\newtheorem{theorem}{Theorem}[section]
\newtheorem{lemma}[theorem]{Lemma}
\theoremstyle{definition}
\theoremstyle{remark}
\icmltitlerunning{Trajectory-Aware Eligibility Traces}
\begin{document}

\twocolumn[
\icmltitle{Trajectory-Aware Eligibility Traces for\texorpdfstring{\\}{ }Off-Policy Reinforcement Learning}



\icmlsetsymbol{equal}{*}

\begin{icmlauthorlist}
\icmlauthor{Brett Daley}{uofa,amii}
\icmlauthor{Martha White}{uofa,amii,cifar}
\icmlauthor{Christopher Amato}{neu}
\icmlauthor{Marlos C.~Machado}{uofa,amii,cifar}
\end{icmlauthorlist}

\icmlaffiliation{uofa}{Department of Computing Science, University of Alberta, Edmonton, AB, Canada}
\icmlaffiliation{amii}{Alberta Machine Intelligence Institute}
\icmlaffiliation{cifar}{Canada CIFAR AI Chair}
\icmlaffiliation{neu}{Khoury College of Computer Sciences, Northeastern University, Boston, MA, USA}

\icmlcorrespondingauthor{Brett Daley}{brett.daley@ualberta.ca}

\icmlkeywords{Reinforcement Learning, Temporal-Difference Learning, Off-Policy, Eligibility Traces}

\vskip 0.3in
]



\printAffiliationsAndNotice{}  

\begin{abstract}
    Off-policy learning from multistep returns is crucial for sample-efficient reinforcement learning, but counteracting off-policy bias without exacerbating variance is challenging.
    Classically, off-policy bias is corrected in a \emph{per-decision} manner:
    past temporal-difference errors are re-weighted by the instantaneous Importance Sampling (IS) ratio after each action via eligibility traces.
    Many off-policy algorithms rely on this mechanism, along with differing protocols for \emph{cutting} the IS ratios to combat the variance of the IS estimator.
    Unfortunately, once a trace has been fully cut, the effect cannot be reversed.
    This has led to the development of credit-assignment strategies that account for multiple past experiences at a time.
    These \emph{trajectory-aware} methods have not been extensively analyzed, and their theoretical justification remains uncertain.
    In this paper, we propose a multistep operator that can express both per-decision and trajectory-aware methods.
    We prove convergence conditions for our operator in the tabular setting, establishing the first guarantees for several existing methods as well as many new ones.
    Finally, we introduce Recency-Bounded Importance Sampling (RBIS), which leverages trajectory awareness to perform robustly across $\lambda$-values in several off-policy control tasks.
\end{abstract}

\section{Introduction}

Reinforcement learning concerns an agent interacting with its environment through trial and error to maximize its expected cumulative reward.
One of the great challenges of reinforcement learning is the temporal credit assignment problem \citep{sutton1984temporal}:
upon receiving a reward, which past actions should be held responsible and, hence, be reinforced?
Basic temporal-difference (TD) methods assign credit to the immediately taken action \citep[e.g.,][]{watkins1989learning,rummery1994line}, bootstrapping from previous experience to learn long-term dependencies.
This process requires a large number of repetitions to generate effective behaviors from rewards, motivating research into \textit{multistep} return estimation
in which credit is distributed among multiple past actions according to some eligibility rule \citep[e.g.,][]{sutton1988learning}.
\looseness=-1

One challenge of multistep estimators is that they generally have higher variance than 1-step estimators~\citep{kearns2000bias}.
This is exacerbated in the off-policy setting, where environment interaction is conducted according to a behavior policy that differs from the target policy for which returns are being estimated.
The discrepancy between the two policies manifests mathematically as bias in the return estimation, which can be detrimental to learning if left unaddressed \citep{precup2000eligibility}.
Despite these challenges, off-policy learning is important for exploration and sample efficiency.
The canonical bias-correction technique is Importance Sampling \citep[IS;][]{kahn1951estimation}, wherein the bias due to the differing policies is eliminated by the product of their probability ratios \citep{precup2000eligibility}.
Although IS theoretically resolves the off-policy bias, it can suffer from extreme variance that makes it largely impractical.
\looseness=-1

Directly managing the variance of the IS estimator has been a fruitful avenue for developing efficient off-policy algorithms.
Past work has focused on modifying the individual IS ratios to reduce the variance of the full update:
e.g., Tree Backup \citep{precup2000eligibility}, $\text{Q}^\pi$($\lambda$) \citep{harutyunyan2016q}, Retrace \citep{munos2016safe}, ABQ \citep{mahmood2017multi}, and C-trace \citep{rowland2020adaptive}.
All of these methods can be implemented online with \textit{per-decision} rules \citep{precup2000eligibility} that determine how much to reduce, or \emph{cut}, the IS ratio according to the current state-action pair.
The re-weighted TD error is then broadcast to previous experiences using eligibility traces \citep{barto1983neuronlike, sutton1984temporal}.
The decisions made by these algorithms are Markov in the sense that each iterative off-policy correction depends on only the current state-action pair.
One issue with this is that it can lead to suboptimal decisions, since fully cutting a trace cannot be reversed later.
In contrast, a \textit{trajectory-aware} method can examine an entire sequence of past state-action pairs to make globally better decisions regarding credit assignment;
for example, when a specific transition yields a high IS ratio, a trajectory-aware method can choose to not cut the trace if the product of all previous IS ratios remains small.

Indeed, some existing off-policy methods already conduct offline bias correction in a trajectory-aware manner.
Perhaps the simplest example is Truncated IS, where the IS ratio products are pre-calculated offline and then clipped to some finite value (see \Cref{sect:example_algorithms}).
More recently, \citet{munos2016safe} suggested a recursive variant of Retrace that automatically relaxes the clipping bound when its historical trace magnitude becomes small;
the authors conjectured that this could lead to faster learning.
No theoretical analysis has been conducted on trajectory-aware algorithms such as these;
their convergence properties are unknown, and the space of possible algorithms has not yet been fully explored.
\looseness=-1

To better understand these algorithms, and to support new discoveries of efficient algorithms, we introduce a unifying theoretical perspective on per-decision and trajectory-aware off-policy corrections.
We propose a multistep operator that accounts for arbitrary dependencies on past experiences, significantly generalizing the per-decision $\mathcal{R}$ operator introduced by \citet{munos2016safe}.
We prove that our operator converges for policy evaluation and control.
In the latter case, we remove the assumptions of increasingly greedy policies and pessimistic initialization used by \citet{munos2016safe}, which has implications for per-decision methods.
Finally, we derive a new method from our theory, Recency-Bounded Importance Sampling (RBIS), which performs favorably to other trajectory-aware methods across a wide range of $\lambda$-values in an off-policy control task.

\section{Preliminaries}

We consider Markov Decision Processes (MDPs) of the form
$(\mathcal{S}, \mathcal{A}, P, R, \gamma)$.
$\mathcal{S}$ and $\mathcal{A}$ are finite sets of states and actions, respectively.
Letting $\Delta \mathcal{X}$ denote the set of distributions over a set $\mathcal{X}$, then
${P \colon \mathcal{S} \times \mathcal{A} \to \Delta \mathcal{S}}$
is the transition function,
$R \colon \mathcal{S} \times \mathcal{A} \to \mathbb{R}$
is the reward function, and $\gamma \in [0,1)$ is the discount factor.
A policy $\pi \colon \mathcal{S} \to \Delta \mathcal{A}$ determines an agent's probability of selecting a given action in each state.
A value function $Q \colon \mathcal{S} \times \mathcal{A} \to \mathbb{R}$ represents the agent's estimate of the expected return achievable from each state-action pair.
For a policy $\pi$, we define the operator
\begin{equation*}
    (P_\pi Q)(s,a) \coloneqq \sum_{s' \in \mathcal{S}} \sum_{a' \in \mathcal{A}} P(s'|s,a) \pi(a'|s') Q(s',a')
    .
\end{equation*}
As a shorthand, we represent value functions and the reward function as vectors in $\mathbb{R}^n$, where ${n = |\mathcal{S} \times \mathcal{A}|}$.
Linear operators such as $P_\pi$ can hence be interpreted as $n \times n$ square matrices that multiply these vectors, with repeated application corresponding to exponentiation:
$P_\pi^t Q = P_\pi (P_\pi^{t-1} Q)$.
\looseness=-1

In the \textit{policy evaluation} setting, we seek to estimate the expected discounted returns for policy $\pi$, given by
$Q^\pi \coloneqq \sum_{t=0}^\infty \gamma^t P_\pi^t R$.
The value function $Q^\pi$ is the unique fixed point of the Bellman operator $T_\pi Q \coloneqq R + \gamma P_\pi Q$,
i.e., it uniquely solves the Bellman equation
$T_\pi Q^\pi = Q^\pi$ \citep{bellman1966dynamic}.
In the \textit{control} setting, we seek to estimate the expected returns $Q^*$ under the optimal policy $\pi^*$.
$Q^*$ is the unique fixed point of the Bellman optimality operator
$(T Q)(s,a) \coloneqq \max_\pi\ (T_\pi Q)(s,a)$,
i.e., it uniquely solves the Bellman optimality equation $T Q^* = Q^*$.
We are particularly interested in the \textit{off-policy} learning case, where trajectories of the form
$(S_0, A_0), (S_1, A_1), (S_2, A_2), \dots$
are generated by interacting with the MDP using a behavior policy $\mu$, where $\mu \neq \pi$.
We define the TD error for policy $\pi$ at time $t$ as
\begin{equation*}
    \delta^\pi_t \coloneqq R_t + \gamma \sum_{a' \in \mathcal{A}} \pi(a'|S_{t+1}) Q(S_{t+1}, a') - Q(S_t, A_t)
    ,
\end{equation*}
where $R_t \coloneqq R(S_t,A_t)$.
Let $\smash{\rho_k \coloneqq \frac{\pi(A_k|S_k)}{\mu(A_k|S_k)}}$ for brevity.
\citet{munos2016safe} introduced the off-policy operator
\begin{align}
    \nonumber
    &(\mathcal{R}Q)(s,a) \coloneqq Q(s,a)~+ \\*
    \label{eq:R_op_def}
    &\qquad \expect{\Bigg}{\mu}{\sum_{t=0}^\infty \gamma^t \Bigg(\prod_{k=1}^t c_k\Bigg) \delta^\pi_t}{(S_0,A_0)=(s,a)}
    ,
\end{align}
where $c_k \coloneqq c(S_k,A_k) \in [0, \rho_k]$.
We refer to the product $\prod_{k=1}^t c_k$ as the \emph{trace} for $(s,a)$ at time $t$.
If any $c_k < \rho_k$, we say that the trace has been (partially) cut.
If any $c_k = 0$, then we have fully cut it.
If the trace is fully cut at $t=1$, i.e., $c_1=0$, then
$(\mathcal{R}Q)(s,a) = Q(s,a) + \E[\delta^\pi_0 \! \! \mid \! \! (S_0,A_0) = (s,a)] = R(s,a) + \gamma \E[\sum_{a' \in \mathcal{A}} \pi(a'|S_1) Q(S_1, a') \! \! \mid \! \! (S_0,A_0) = (s,a)]$,
which is the standard 1-step bootstrap target like in TD(0) \citep{sutton1988learning}.
Notice that each $c_k$ is Markov, as it depends only on $(S_k,A_k)$ and is otherwise independent of the preceding trajectory.
In other words, the update for $\mathcal{R}$ can be calculated \textit{per decision} \citep{precup2000eligibility}, permitting an efficient online implementation with eligibility traces.

\section{Trajectory-Aware Eligibility Traces}

While per-decision traces are convenient from a computational perspective, they require making choices about how much to cut the trace without considering the effects of previous choices.
This can lead to suboptimal decisions;
for example, if the trace is cut by setting $c_k = 0$ at some timestep, then the effect cannot be reversed later.
Regardless of whatever new experiences are encountered by the agent, experiences before time $k$ will be ineligible for credit assignment, resulting in an opportunity cost.
In fact, this exact phenomenon is why Watkins' Q($\lambda$) \citep{watkins1989learning} often learns more slowly than Peng's Q($\lambda$) \citep{peng1996incremental}, even though the former avoids off-policy bias \citep{sutton1998reinforcement, daley2019reconciling, kozuno2021revisiting}.
The same effect (but to a lesser extent) impacts Tree Backup and Retrace, where $c_k \leq 1$ always in \Cref{eq:R_op_def}, implying that the traces for past experiences can never increase.

\begin{figure}[t]
    \centering
    \includegraphics[width=0.8\columnwidth]{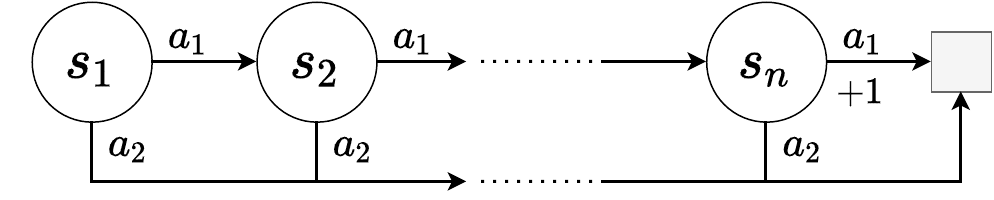}
    \caption{
        The Tightrope Problem.
        Starting from state $s_1$, the agent must take a specific sequence of $n$ actions to receive $+1$ reward.
    }
    \label{fig:tightrope}
\end{figure}

We illustrate this phenomenon in a small, deterministic MDP that we call the Tightrope Problem (see \Cref{fig:tightrope}).
The environment consists of $n$ sequential, non-terminal states with two actions $a_1,a_2$ available.
The agent starts in state $s_1$ and advances from $s_i$ to $s_{i+1}$ whenever it takes action $a_1$.
If $i=n$, then the episode terminates and the agent receives $+1$ reward.
Taking action $a_2$ in any state immediately terminates the episode with no reward.
Clearly, the optimal policy is to execute $a_1$ regardless of the state.

Now consider the following off-policy learning scenario.
Suppose the agent's behavior policy $\mu$ is uniform random, but the target policy $\pi$ is $\epsilon$-greedy with respect to a value function $Q$.
For each state $s$, it follows that $\pi(a|s) = 1-\epsilon$ if $a = \argmax_{a'} Q(s,a')$ and $\pi(a|s) = \epsilon$ otherwise.
We assume $\epsilon$ is small in the sense that $\epsilon < \frac{1}{2}$, and that ${\gamma = 1}$.
Suppose now that the agent successfully receives the $+1$ reward during an episode, implying that it took action $a_1$ on every timestep.
We can compute the eligibility of the initial state-action pair $(s_1,a_1)$ as an expression in the number $k$ of incorrect actions in the greedy policy (i.e., where $\argmax_{a'} Q(s,a') \neq a_1$) out of the $n-1$ subsequent actions.
Letting $\lambda \in [0,1]$ be a decay parameter, the standard IS estimator (which does not cut traces when $\lambda=1$) provides an eligibility of
\begin{equation}
    \label{eq:tightrope_is}
    \left(\lambda \frac{1-\epsilon}{1 \mathbin{/} 2}\right)^{n-1-k} \left(\lambda \frac{\epsilon}{1 \mathbin{/} 2}\right)^k
    = \lambda^{n-1} [2(1-\epsilon)]^{n-1-k} (2\epsilon)^k
    .
\end{equation}
This value can be greater than $1$ when $k \ll n - 1$, which suggests that the agent's behavior should be heavily reinforced when the greedy policy agrees closely with the optimal policy;
however, a per-decision method like Retrace, which cuts traces without considering the full trajectory (see \Cref{sect:example_algorithms}), ultimately assigns a much lower eligibility:
\begin{equation*}
    \left[\lambda \min \! \left(\! 1,~\frac{1-\epsilon}{1 \mathbin{/} 2} \! \right) \! \right]^{n-1-k}
    \! \left[\lambda \min \! \left(\! 1,~\frac{\epsilon}{1 \mathbin{/} 2} \! \right) \! \right]^k
    \! = \lambda^{n-1} (2\epsilon)^k
    .
\end{equation*}
The eligibility now decays monotonically for every suboptimal action in the greedy policy, illuminating how per-decision trace cutting can lead to excessively small eligibilities, especially when $\epsilon$ is close to $0$.

This issue stems from the fact that Retrace is not \emph{aware} of its past eligibilities, and continues to decay them even when they already form an underestimate compared to IS.
This issue is not unique to Retrace, and affects other per-decision methods like Tree Backup.
Instead, a \emph{trajectory-aware} method that can actively adapt its trace-cutting behavior based on the magnitude of past eligibilities would be better.

One way to obtain a trajectory-aware method is to compute the exact IS product in \Cref{eq:tightrope_is}, and then make adjustments to it to achieve certain properties (e.g., convergence and variance reduction).
For example, Truncated IS (see \Cref{sect:example_algorithms}) simply imposes a fixed bound on the IS estimator:
\begin{equation}
    \lambda^{n-1} \min \! \left(1,~[2(1-\epsilon)]^{n-1-k} (2\epsilon)^k\right)
    .
\end{equation}
Ignoring $\lambda$, Truncated IS reduces the eligibility only when it exceeds a pre-specified threshold, effectively avoiding trace cuts when the true IS estimate is small.
In \Cref{sect:rbis}, we propose an algorithm, RBIS, which achieves a similar effect using a recursive, time-decaying threshold.

As this example demonstrates, it can be advantageous to consider the agent's past experiences to produce better decisions regarding credit assignment.
One of our principal contributions is the proposal and analysis of an off-policy operator $\M$ that encompasses this possibility.
Let $\F_t \coloneqq (S_0, A_0), (S_1, A_1), \dots, (S_t, A_t)$.
We define $\M$ such that
\looseness=-1
\begin{align}
    \nonumber
    &(\M Q)(s,a) \coloneqq Q(s,a)~+ \\*
    \label{eq:M_op_def}
    &\qquad \expect{\Bigg}{\mu}{\sum_{t=0}^\infty \gamma^t \beta_t \delta^\pi_t}{(S_0,A_0)=(s,a)}
    ,
\end{align}
where $\beta_t \coloneqq \beta(\F_t)$ is a trace that generally depends on the history $\F_t$.
We define $\beta_0 \coloneqq 1$ to ensure that the first TD error, $\delta^\pi_0$, is applied.
In \Cref{sect:analysis}, we characterize the values of $\beta_t$ for $t \geq 1$ that lead to convergence.

The major analytical challenge of $\M$---and its main novelty---is the complex dependence on the sequence $\F_t$.
This makes the operator difficult to analyze mathematically, as the terms in the series
$1 + \gamma \beta_1 + \gamma^2 \beta_2 + \cdots$
generally share no common factors that would allow a recursive formula for eligibility traces.
Some off-policy methods, however, cannot be described by factored traces, and therefore removing this assumption is necessary to understand existing algorithms (see \Cref{sect:example_algorithms}), while also paving the way for new credit-assignment methods.
In the special case where $\beta_t$ does factor into Markov coefficients, i.e., $\smash{\beta_t = \prod_{k=1}^t c_k}$,
then \Cref{eq:M_op_def} reduces to \Cref{eq:R_op_def}, taking us back to the per-decision setting studied by \citet{munos2016safe}.
\emph{$\mathcal{M}$, therefore, unifies per-decision and trajectory-aware methods.}

\section{Unifying Off-Policy Algorithms}
\label{sect:example_algorithms}

The operator $\mathcal{M}$ is a strict generalization of the previous operator considered for trace-based methods, allowing us to express existing algorithms in this form.
We provide a non-exhaustive list of examples below with the corresponding $\beta_t$ used in $\mathcal{M}$. 
For brevity, let $\Pi_t \coloneqq \prod_{k=1}^t \rho_k$.

\textbf{Importance Sampling:}
$\beta_t = \lambda^t \Pi_t$
\citep{kahn1951estimation}.
The standard approach for correcting off-policy bias.
Although it is the only unbiased estimator in this list (if $\lambda=1$), it suffers from high variance, making it difficult to utilize.
\looseness=-1

\textbf{$\text{Q}^\pi$($\lambda$):}
$\beta_t = \lambda^t$
\citep{harutyunyan2016q}.
A straightforward algorithm that decays the TD errors by a fixed constant.
The algorithm does not require explicitly knowing $\mu$, which is desirable, but can diverge if $\pi$ and $\mu$ differ too much \citep[][Theorem~1]{harutyunyan2016q}.

\textbf{Tree Backup:}
${\beta_t \!=\! \prod_{k=1}^t \lambda \pi(A_k|S_k)}$
\citep{precup2000eligibility}.
A method that automatically cuts traces according to the product of probabilities under $\pi$, which forms a conservative lower bound on the IS estimate.
Tree Backup converges for any behavior policy $\mu$, but it is not efficient since traces are cut excessively---especially in the on-policy case.

\textbf{Retrace:}
$\beta_t = \prod_{k=1}^t \lambda \min (1,~\rho_k)$
\citep{munos2016safe}.
A convergent algorithm for arbitrary policies $\pi$ and $\mu$ that remains efficient in the on-policy case because it does not cut traces (if $\lambda = 1$);
however, the fact that $\beta_t$ never increases can cause the trace products to decay too quickly in practice \citep{mahmood2017multi,rowland2020adaptive}.

All of the above can be analyzed using a per-decision operator.
The next two, on the other hand, have weightings based on the entire trajectory.
We use the theory for our general $\M$ operator to prove properties about these methods.

\textbf{Recursive Retrace:}
${\beta_t \!=\! \lambda \min (1,\beta_{t-1} \rho_t)}$
\citep{munos2016safe}.
A modification to Retrace conjectured to lead to faster learning. 
It clips large products of ratios, rather than individual ratios.
Its convergence for control is an open question, which we solve in \Cref{sect:analysis}.

\textbf{Truncated Importance Sampling:
$\beta_t = \lambda^t \min (1,~\Pi_t)$}
\citep{ionides2008truncated}.
A simple but effective method to combat the variance of IS.
Variations of this algorithm have been applied in the reinforcement learning
literature \citep[e.g.,][]{uchibe2004competitive, wawrzynski2007truncated, wawrzynski2009real, wang2017sample}, but, to our knowledge, its convergence in an MDP setting has not been studied.
In \Cref{sect:examples_of_convergence}, we show that it can diverge in at least one off-policy problem.

\section{Convergence Analysis}
\label{sect:analysis}

In this section, we study the convergence properties of the $\mathcal{M}$ operator for policy evaluation and control.
It will be convenient to re-express \Cref{eq:M_op_def} in vector notation for our analysis.
To do this, let us first bring the expectation inside the sum, by linearity of expectation:
\begin{align}
    \nonumber
    &(\mathcal{M}Q)(s,a) = Q(s,a)~+ \\*
    \label{eq:M_op_def_linearity}
    &\qquad \sum_{t=0}^\infty \gamma^t \expect{\big}{\mu}{\beta_t \delta^\pi_t}{(S_0,A_0)=(s,a)}
    .
\end{align}
To write \Cref{eq:M_op_def_linearity} in vector form, we define an operator $B_t$ such that, for an arbitrary vector $X$ in $\R^n$,
\begin{equation}
    \label{eq:B_op_def}
    (B_t X)(s,a) \!\coloneqq\! \expect{\big}{\mu}{\beta_t X(S_t,A_t)}{(S_0,A_0)=(s,a)}
    ,
\end{equation}
allowing us to express the $\mathcal{M}$ operator as
\begin{equation}
    \label{eq:M_op_def_vector}
    \mathcal{M}Q = Q + \sum_{t=0}^\infty \gamma^t B_t (T_\pi Q - Q)
    .
\end{equation}
$B_t$ is a linear operator and hence can be represented as a matrix in $\R^{n \times n}$, the elements of which are nonnegative.
Each element of $B_t$, row-indexed by $(s,a)$ and column-indexed by $(s',a')$, has the form
\begin{align}
    \nonumber
    &\!B_t((s,a),(s',a')) \!=\!
    \Pr_{\mu}((S_t,A_t)\!=\!(s',a') \!\mid\! (S_0,A_0)\!=\!(s,a)) \\*
    \label{eq:B_op_def_elements}
    &\qquad \times~\expect{\big}{\mu}{\beta_t \!}{\! (S_0,A_0)\!=\!(s,a), (S_t,A_t)\!=\!(s',a')}
    .
\end{align}
We justify this form in \Cref{app:M_op}.
Note that $B_0 = I$, the identity matrix, because of our earlier definition of $\beta_0 \coloneqq 1$.
In the following sections, all inequalities involving vectors or matrices should be interpreted element wise.
We let $\norm{X} \coloneqq \norm{X}_\infty$ for a matrix (or vector) $X$, which corresponds to the maximum absolute row sum of $X$.
We also define $\ones \in \R^n$ to be the vector of ones, such that $X \ones$ gives the row sums of $X$.

\subsection{Convergence for Policy Evaluation}

We start in the off-policy policy evaluation setting. 
Specifically, our goal is to prove that the repeated application of the $\mathcal{M}$ operator to an arbitrarily initialized vector $Q \in \R^n$ converges to $Q^\pi$.

\begin{condition}
    \label{cond:convergence}
    $\beta_t \leq \beta_{t-1} \rho_t$,
    $\forall~\F_t$,
    $\forall~t \geq 1$.
\end{condition}

\begin{theorem}
    \label{theorem:evaluation}
    If \Cref{cond:convergence} holds, then $\M$ is a contraction mapping with $Q^\pi$ as its unique fixed point.
    Consequently, $\smash{\lim_{i \to \infty} \M^i Q = Q^\pi}$, $\forall~Q \in \R^n$.
\end{theorem}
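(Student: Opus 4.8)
The plan is to exhibit $\M$ as an affine operator and bound the norm of its linear part. First I would verify that $Q^\pi$ is a fixed point: since $T_\pi Q^\pi = Q^\pi$, every summand in \Cref{eq:M_op_def_vector} vanishes and $\M Q^\pi = Q^\pi$. Because $T_\pi Q - Q = R + (\gamma P_\pi - I)Q$ is affine in $Q$, \Cref{eq:M_op_def_vector} can be written as $\M Q = A Q + b$ with linear part $A = I + \sum_{t=0}^\infty \gamma^t B_t(\gamma P_\pi - I)$ and constant part $b = \sum_{t=0}^\infty \gamma^t B_t R$. Subtracting the fixed-point identity then gives $\M Q - Q^\pi = A(Q - Q^\pi)$, so it suffices to prove $\norm{A} \leq \gamma < 1$; uniqueness of the fixed point and the claimed convergence $\M^i Q \to Q^\pi$ follow from the Banach fixed-point theorem.

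The crux is a per-step comparison between consecutive $B_t$. Using the importance-sampling identity $(P_\pi X)(S_{t-1},A_{t-1}) = \E_\mu[\rho_t\, X(S_t,A_t) \mid S_{t-1},A_{t-1}]$ --- which holds because averaging $\mu(a|s)\rho$ over $a$ recovers $\pi(a|s)$ --- together with the tower property and the $\F_{t-1}$-measurability of $\beta_{t-1}$, I would show that $(B_{t-1} P_\pi X)(s,a) = \E_\mu[\beta_{t-1}\rho_t\, X(S_t,A_t) \mid (S_0,A_0)=(s,a)]$. Comparing this against $(B_t X)(s,a) = \E_\mu[\beta_t\, X(S_t,A_t) \mid \cdot]$ and invoking \Cref{cond:convergence}, namely $\beta_t \leq \beta_{t-1}\rho_t$, yields the element-wise matrix inequality $B_t \leq B_{t-1} P_\pi$ after testing against every nonnegative $X$ (equivalently, the standard basis vectors). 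This is the step I expect to be the main obstacle, since it is where the history dependence of $\beta_t$ must be handled and the importance-sampling bookkeeping carried out carefully; once it is established, the rest is mechanical.

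With this comparison in hand, the remainder is a telescoping computation. Reindexing the sum and using $B_0 = I$ cancels the leading identity and collapses the linear part to $A = \sum_{t=1}^\infty \gamma^t(B_{t-1} P_\pi - B_t)$, whose entries are nonnegative by the comparison lemma. Since $A \geq 0$ element-wise, $\norm{A} = \norm{A \ones}$, so it remains to bound the row sums. Because $P_\pi \ones = \ones$, writing $v_t \coloneqq B_t \ones = \E_\mu[\beta_t \mid \cdot]$ gives $A \ones = \sum_{t=1}^\infty \gamma^t(v_{t-1} - v_t)$, and an Abel summation reduces this to $\gamma \ones - (1-\gamma)\sum_{t=1}^\infty \gamma^t v_t$. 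The comparison lemma applied with $X = \ones$ also shows $v_t \leq v_{t-1} \leq \cdots \leq v_0 = \ones$, so each $v_t$ is nonnegative and bounded (which simultaneously guarantees that the series defining $\M$ and $A$ converge), and the subtracted term is nonnegative. Hence $0 \leq A \ones \leq \gamma \ones$, giving $\norm{A} \leq \gamma < 1$ and completing the argument.
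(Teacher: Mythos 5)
Your proposal is correct and follows essentially the same route as the paper: you identify the same linear error operator (your $A$ is the paper's $Z = \sum_{t=1}^\infty \gamma^t(B_{t-1}P_\pi - B_t)$ from \Cref{lemma:fp_diff}), establish $B_{t-1}P_\pi - B_t \geq 0$ via the same importance-sampling comparison under \Cref{cond:convergence} that appears in \Cref{lemma:Z_row_sums}, and bound the row sums by the same telescoping argument using $P_\pi \ones = \ones$ before invoking Banach. The only cosmetic difference is that you phrase the key comparison via the tower property rather than writing out the conditional sums explicitly, and you note the monotonicity $v_t \leq v_{t-1}$ to justify convergence of the series, which the paper leaves implicit.
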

\proofglue
\begin{proof}
    In \Cref{lemma:fp_diff} (Appendix~\ref{app:lemma_fp_diff}), we show that $Q^\pi$ is a fixed point of $\mathcal{M}$ and that
\begin{equation}
    \label{eq:M_op_linear_error}
    \mathcal{M}Q - Q^\pi = Z (Q - Q^\pi)
    ,
\end{equation}
where
$Z \coloneqq \sum_{t=1}^\infty \gamma^t (B_{t-1} P_\pi - B_t)$.
In \Cref{lemma:Z_row_sums} (\Cref{app:lemma_Z_row_sums}), we also show that ${Z \geq 0}$ and ${Z \ones \leq \gamma}$ using the assumption that
$\beta_t \leq \beta_{t-1} \rho_t$, $\forall~\F_t$, $\forall~t \geq 1$ (\Cref{cond:convergence}).
Consequently, ${Z (Q - Q^\pi)}$ is a vector whose components each comprise a nonnegative-weighted combination of the components of $Q - Q^\pi$, where the weights add up to at most~$\gamma$.
This means
${\norm{\mathcal{M}Q - Q^\pi} \leq \gamma \norm{Q - Q^\pi}}$,
and $\mathcal{M}$ is a contraction mapping.
Its fixed point, $Q^\pi$, must therefore be unique by the Banach fixed-point theorem, implying that $\smash{\lim_{i \to \infty} \mathcal{M}^i Q = Q^\pi}$ for every $Q \in \R^n$ when $\gamma < 1$.

\end{proof}
\proofglue
Given that the ratio $\smash{\frac{\beta_t}{\beta_{t-1}}}$ is bounded by $\rho_t$ (\Cref{cond:convergence}), the $\M$ operator converges to $Q^\pi$.
Intuitively, we can think of this ratio as the \emph{effective} per-decision factor at time $t$;
convergence is guaranteed whenever this factor is no greater than $\rho_t$, analogous to the convergence result for the $\mathcal{R}$ operator \citep[][Theorem~1]{munos2016safe}.
Our theorem implies the existence of a space of convergent trajectory-aware algorithms, because each trace $\beta_t$ can be chosen arbitrarily so long as it always satisfies the bound on this ratio.
\looseness=-1

\subsection{Convergence for Control}

We now consider the more challenging setting of control.
Given sequences of target policies $(\pi_i)_{i \geq 0}$ and behavior policies $(\mu_i)_{i \geq 0}$, we aim to show that the sequence of value functions $(Q_i)_{i \geq 0}$ given by $Q_{i+1} \coloneqq \mathcal{M}_i Q_i$ converges to $Q^*$.
Here, $\mathcal{M}_i$ is the $\mathcal{M}$ operator defined for $\pi_i$ and $\mu_i$.

Compared to the convergence proof of the $\mathcal{R}$ operator \citep[][Theorem~2]{munos2016safe}, the main novelty of our proof is the fact that the traces under $\mathcal{M}$ are not Markov.
Consequently, we require new techniques to establish bounds on ${Q - Q^*}$, since \Cref{eq:M_op_def} is not representable as an infinite geometric series and so the summation does not have a closed-form expression.
We additionally relax two assumptions in the previous work, on initialization of the value function and on increasing greediness of the policy.
We require only that the target policies become \emph{greedy in the limit}.
We say that a sequence of policies is greedy in the limit if
$T_{\pi_i} Q_i \to T Q_i$ as $i \to \infty$.
We discuss the significance of these relaxations to the assumptions in \Cref{sect:discussion}.
\looseness=-1

First, let $C_i \coloneqq \sum_{t=0}^\infty \gamma^t B_t$ for the policies $\pi_i$ and $\mu_i$, and write the $\M$ operator at iteration $i$ as
\begin{equation}
    \label{eq:Mi_op_def}
    \M_i Q = Q + C_i (T_{\pi_i} Q - Q)
    .
\end{equation}
We now present our convergence theorem for control.
\begin{restatable}{theorem}{control}
    \label{theorem:control}
    Consider a sequence of target policies $(\pi_i)_{i \geq 0}$ and a sequence of arbitrary behavior policies $(\mu_i)_{i \geq 0}$.
    Let $Q_0$ be an arbitrary vector in $\R^n$ and define the sequence $Q_{i+1} \coloneqq \M_i Q_i$, where $\M_i$ is the operator defined by \Cref{eq:Mi_op_def}.
    Assume that $(\pi_i)_{i \geq 0}$ is greedy in the limit, and let $\epsilon_i \geq 0$ be the smallest constant such that
    $T_{\pi_i} Q_i \geq T Q_i - \epsilon_i \norm{Q_i} \ones$.
    If \Cref{cond:convergence} holds for all $i$, then
    \begin{equation}
        \label{eq:almost_contraction}
        \norm{\M_i Q_i - Q^*} \leq \gamma \norm{Q_i - Q^*} + \frac{\epsilon_i}{1-\gamma} \norm{Q_i}
        ,
    \end{equation}
    and, consequently, $\smash{\lim\limits_{i \to \infty} Q_i = Q^*}$.
\end{restatable}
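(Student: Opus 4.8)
The plan is to establish the one-step estimate \Cref{eq:almost_contraction} by sandwiching $\M_i Q_i - Q^*$ between an upper and a lower bound, and then to feed the resulting recursion into a vanishing-perturbation argument. Throughout I use $C_i = \sum_{t=0}^\infty \gamma^t B_t$ and $Z_i = I - C_i(I - \gamma P_{\pi_i})$; from \Cref{lemma:Z_row_sums} I may take $B_t \ge 0$, $Z_i \ge 0$, and $Z_i\ones \le \gamma\ones$, and since $C_i(I-\gamma P_{\pi_i})\ones = (I-Z_i)\ones$ with $(I-\gamma P_{\pi_i})\ones = (1-\gamma)\ones$, I get the two-sided row-sum bound $\ones \le C_i\ones \le \tfrac{1}{1-\gamma}\ones$, in particular $\norm{C_i}\le\tfrac{1}{1-\gamma}$. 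The starting point is the identity
\[
    \M_i Q_i - Q^* = Z_i(Q_i - Q^*) + C_i(T_{\pi_i}Q^* - Q^*),
\]
obtained from \Cref{eq:Mi_op_def} by substituting $Q^* = T Q^*$ and the definition of $Z_i$, mirroring the derivation of \Cref{eq:M_op_linear_error} but retaining the residual $C_i(T_{\pi_i}Q^*-Q^*)$, which no longer vanishes because $Q^*$ is a fixed point of $T$ rather than of $T_{\pi_i}$. The upper bound is then immediate and needs no greediness: $T_{\pi_i}Q^* \le T Q^* = Q^*$ and $C_i \ge 0$ make the residual nonpositive, so $\M_i Q_i - Q^* \le Z_i(Q_i - Q^*) \le \gamma\norm{Q_i - Q^*}\ones$ by $Z_i \ge 0$ and $Z_i\ones\le\gamma\ones$.

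For the lower bound I would first isolate the greediness error. Decomposing $T_{\pi_i}Q^* - Q^* = (T_{\pi_i}Q_i - T Q_i) + \gamma P_{\pi_i}(Q^* - Q_i) + (T Q_i - Q^*)$ and applying the defining inequality $T_{\pi_i}Q_i \ge T Q_i - \epsilon_i\norm{Q_i}\ones$ to the first bracket, that bracket costs at most $-\epsilon_i\norm{Q_i}\ones$, which after $C_i$ and $\norm{C_i}\le\tfrac{1}{1-\gamma}$ yields exactly the $\tfrac{\epsilon_i}{1-\gamma}\norm{Q_i}$ term. The surviving pieces recombine cleanly: the $+\gamma C_i P_{\pi_i}$ contribution hidden in $Z_i = I - C_i(I-\gamma P_{\pi_i})$ cancels the $-\gamma C_i P_{\pi_i}(Q_i - Q^*)$ term, leaving $N_i Q_i - Q^*$, where $N_i Q \coloneqq Q + C_i(T Q - Q)$ is the idealized operator obtained by replacing $T_{\pi_i}$ with $T$. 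Since $T Q^* = Q^*$, the point $Q^*$ is a fixed point of $N_i$, so the entire lower bound reduces to the contraction-type estimate $N_i Q_i - Q^* \ge -\gamma\norm{Q_i - Q^*}\ones$.

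This last step is the crux and the main obstacle. The clean route is a monotone sandwich: from $Q^* - \norm{Q_i-Q^*}\ones \le Q_i \le Q^* + \norm{Q_i-Q^*}\ones$ together with the exact evaluation $N_i(Q^* \pm c\ones) = Q^* \pm c\ones \mp (1-\gamma)c\,C_i\ones \in [\,Q^* - \gamma c\ones,\ Q^* + \gamma c\ones\,]$ (which uses only $(1-\gamma)\ones \le (1-\gamma)C_i\ones \le \ones$), one would like to conclude $N_i Q_i \in [\,Q^* - \gamma\norm{Q_i-Q^*}\ones,\ Q^* + \gamma\norm{Q_i-Q^*}\ones\,]$. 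The difficulty—and precisely why the per-decision argument of \citet{munos2016safe} does not transfer—is that $N_i$ need not be monotone for the relevant comparison: a monotonicity step whose lower endpoint is greedy for $Q^*$ (via the optimal policy $\pi^*$) rather than for $\pi_i$ would require $I - C_i(I - \gamma P_{\pi^*}) \ge 0$, yet the elementwise sign structure behind \Cref{lemma:Z_row_sums} is guaranteed only for $P_{\pi_i}$, the policy matching the traces $\beta_t$, and fails for $P_{\pi^*}$. Because $\beta_t$ is non-Markov there is no closed form to fall back on, so one must control $N_i Q_i - Q^* = C_i(T Q_i - T Q^*) - (C_i - I)(Q_i - Q^*)$ \emph{without} splitting it into independently normed terms (which would lose a factor of $\tfrac{1}{1-\gamma}$), instead exploiting the correlation that the optimality operator enforces between $T Q_i - T Q^*$ and $Q_i - Q^*$ along with $Z_i \ge 0$ and $Z_i\ones\le\gamma\ones$.

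Finally, combining the two bounds gives \Cref{eq:almost_contraction}. To obtain $Q_i \to Q^*$ I would set $a_i \coloneqq \norm{Q_i - Q^*}$ and use $\norm{Q_i} \le a_i + \norm{Q^*}$ to rewrite \Cref{eq:almost_contraction} as $a_{i+1} \le \big(\gamma + \tfrac{\epsilon_i}{1-\gamma}\big)a_i + \tfrac{\epsilon_i}{1-\gamma}\norm{Q^*}$. Greediness in the limit forces $\epsilon_i \to 0$, so the effective contraction factor is eventually bounded below $1$ (keeping $(a_i)$ bounded) while the additive perturbation vanishes; a routine perturbed-contraction argument then gives $\limsup_i a_i = 0$, i.e.\ $Q_i \to Q^*$.
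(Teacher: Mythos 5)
Your overall architecture is the same as the paper's: sandwich $\M_i Q_i - Q^*$ between an upper bound $Z_i(Q_i - Q^*)$ and a lower bound carrying an $O(\epsilon_i \norm{Q_i})$ error, then run a vanishing-perturbation recursion using $\epsilon_i \to 0$. Your upper bound, your extraction of the $\tfrac{\epsilon_i}{1-\gamma}\norm{Q_i}$ term via $C_i \geq 0$ and $\norm{C_i} \leq \tfrac{1}{1-\gamma}$, and your closing recursion all match the paper (the paper picks $i^*$ with $\epsilon_i \leq \tfrac{1}{2}(1-\gamma)^2$ so the effective factor becomes $\tfrac{1+\gamma}{2} < 1$; your version is equivalent). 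The gap is exactly where you flag it: you never prove $N_i Q_i - Q^* \geq -\gamma \norm{Q_i - Q^*}\ones$, so the lower half of the sandwich---and hence \Cref{eq:almost_contraction}---is not established. A proof cannot end with ``one must exploit the correlation \dots without splitting into independently normed terms''; that is a description of the difficulty, not an argument.

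For the record, the paper closes this step by writing $T Q_i - T Q^* \geq T_{\pi^*} Q_i - T_{\pi^*} Q^* = \gamma P_{\pi^*}(Q_i - Q^*)$, which gives $N_i Q_i - Q^* \geq Z_i^*(Q_i - Q^*)$ with $Z_i^* \coloneqq I - C_i(I - \gamma P_{\pi^*}) = \sum_{t \geq 1}\gamma^t(B_{t-1}P_{\pi^*} - B_t)$, and then invokes \Cref{lemma:Z_row_sums} to assert $Z_i^* \geq 0$ and $Z_i^* \ones \leq \gamma \ones$, whence the componentwise lower bound $-\gamma\norm{Q_i - Q^*}\ones$. Your stated obstruction to this route is substantive: the nonnegativity argument in \Cref{lemma:Z_row_sums} reduces to $\pi(A_t|S_t)\beta_{t-1} - \mu(A_t|S_t)\beta_t \geq 0$, which \Cref{cond:convergence} delivers only when the policy in $P_\pi$ is the same $\pi_i$ that appears in $\rho_t$; with $P_{\pi^*}$ in its place the relevant quantity is $\pi^*(A_t|S_t)\beta_{t-1} - \mu(A_t|S_t)\beta_t$, which can be negative wherever $\pi^*$ puts less mass than $\pi_i$ (e.g., $\beta_t = \Pi_t$ with $\pi_i$ concentrated on an action that $\pi^*$ never takes). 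The row-sum identity $Z_i^*\ones \leq \gamma\ones$ survives, since it uses only $P_{\pi^*}\ones = \ones$, but without elementwise nonnegativity it does not bound $\norm{Z_i^*}$. So you have correctly located the one step of the paper's argument that is not covered by its own lemma as stated---a genuinely useful observation---but pointing out that the obvious route needs further justification does not discharge your obligation to prove the bound. As submitted, the proposal is incomplete at precisely the step that distinguishes the control setting from policy evaluation.
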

\proofglue
\begin{proof}[Proof (sketch; full proof in \Cref{app:theorem_control}).]
    We define matrices $Z_i$ and $Z_i^*$, which correspond to $Z$ in \Cref{eq:M_op_linear_error} for target policies $\pi_i$ and $\pi^*$, respectively, and behavior policy $\mu_i$.
    We then derive the inequalities
    \begin{equation*}
        Z_i^* (Q_i - Q^*) - \epsilon_i \norm{Q_i} C_i \ones
        \leq \M_i Q_i - Q^*
        \leq Z_i (Q_i - Q^*)
        ,
    \end{equation*}
    which together imply \Cref{eq:almost_contraction}.
    Thus, $\M_i$ is nearly a contraction mapping with $Q^*$ as its unique fixed point, excepting the influence of the $O(\norm{Q_i})$ term.
    However, the greedy-in-the-limit target policies guarantee that $\epsilon_i \to 0$.
    Showing that $\norm{Q_i}$ remains finite completes the proof because $\norm{Q_i - Q^*} \to 0$ must follow.
\end{proof}
\proofglue
The convergence criteria for $\beta_t$ (\Cref{cond:convergence}) is the same for both policy evaluation and control.
In fact, the only additional assumption we need for control is the greedy-in-the-limit target policies.
Crucially, the proof allows arbitrary behavior policies and an arbitrary value function initialization $Q_0$, which we further discuss in \Cref{sect:discussion}.

\subsection{Examples of Convergence and Divergence}
\label{sect:examples_of_convergence}

The generality of the $\M$ operator means that it provides convergence guarantees for a number of credit-assignment methods that we did not discuss in \Cref{sect:example_algorithms}.
These include variable or past-dependent $\lambda$-values
\citep[e.g.,][]{watkins1989learning,singh1996reinforcement,yu2018generalized}.
All of these can be represented in a common form and shown to satisfy \Cref{cond:convergence};
convergence for policy evaluation and control for the instantiated trajectory-aware operator follows as a corollary, since \Cref{cond:convergence} is sufficient to apply \Cref{theorem:evaluation,theorem:control}.
\begin{restatable}{proposition}{generaltdlambda}
    \label{prop:general_td_lambda}
    Any traces expressible in the form
    ${\beta_t = \prod_{k=1}^t \lambda(\F_k) \rho_k}$,
    $\lambda(\F_k) \in [0,1]$,
    satisfy \Cref{cond:convergence}.
\end{restatable}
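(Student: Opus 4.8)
The plan is to exploit the product structure of $\beta_t$ directly to extract a one-step recursion, and then verify \Cref{cond:convergence} pointwise for each fixed history $\F_t$. The key observation is that the hypothesized form is \emph{telescoping} in a multiplicative sense: since $\beta_t = \prod_{k=1}^t \lambda(\F_k)\rho_k$ and $\beta_{t-1} = \prod_{k=1}^{t-1}\lambda(\F_k)\rho_k$, the $t$-th trace factors exactly as $\beta_t = \beta_{t-1}\,\lambda(\F_t)\,\rho_t$. This identity converts the goal into a simple comparison between $\beta_{t-1}\lambda(\F_t)\rho_t$ and $\beta_{t-1}\rho_t$, for which it suffices to control the single extra factor $\lambda(\F_t)$.

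The steps I would carry out, in order, are as follows. First, I would fix an arbitrary $t \geq 1$ and an arbitrary history $\F_t$, and write out the factorization $\beta_t = \beta_{t-1}\,\lambda(\F_t)\,\rho_t$ noted above. Second, I would record that every factor appearing here is nonnegative: each $\rho_k \geq 0$ as a ratio of probabilities, and each $\lambda(\F_k) \in [0,1]$ by hypothesis, so the partial product $\beta_{t-1} \geq 0$ and hence $\beta_{t-1}\rho_t \geq 0$. Third, I would apply the bound $\lambda(\F_t) \leq 1$ together with this nonnegativity to conclude
\begin{equation*}
    \beta_t = \beta_{t-1}\,\lambda(\F_t)\,\rho_t \leq \beta_{t-1}\,\rho_t
    ,
\end{equation*}
which is precisely \Cref{cond:convergence}. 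Since $t$ and $\F_t$ were arbitrary, the condition holds for all $t \geq 1$ and all $\F_t$.

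There is no substantive obstacle here: the result is essentially immediate once the multiplicative recursion is written down. The only point requiring any care is the nonnegativity of $\beta_{t-1}\rho_t$, because multiplying an inequality of the form $\lambda(\F_t) \leq 1$ by a factor preserves the direction only when that factor is nonnegative; this is guaranteed by the ranges of $\lambda$ and $\rho$. Consequently, \Cref{theorem:evaluation,theorem:control} apply to any such $\beta_t$, and convergence for both policy evaluation and control follows as an immediate corollary, as the surrounding text claims.
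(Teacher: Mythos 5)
Your proof is correct and is essentially identical to the paper's one-line argument, which also factors $\beta_t = \beta_{t-1}\lambda(\F_t)\rho_t$ and drops the $\lambda(\F_t) \leq 1$ factor. The extra care you take about nonnegativity of $\beta_{t-1}\rho_t$ is implicit in the paper but harmless to make explicit.
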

\proofglue
\begin{proof}
    $\beta_t = \beta_{t-1} \lambda(\F_t) \rho_t \leq \beta_{t-1} \rho_t$.

\end{proof}
\proofglue

In \Cref{sect:example_algorithms}, we also discussed two existing trajectory-aware methods whose convergence is unknown.
We show that Recursive Retrace satisfies our required condition.
\begin{restatable}{proposition}{recursiveretrace}
    \label{prop:recursive_retrace.tex}
    Recursive Retrace satisfies \Cref{cond:convergence}.
\end{restatable}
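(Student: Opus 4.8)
The plan is to verify \Cref{cond:convergence} directly from the recursive definition $\beta_t = \lambda \min(1, \beta_{t-1} \rho_t)$, where $\lambda \in [0,1]$ is the usual decay parameter. The condition we must establish is $\beta_t \leq \beta_{t-1} \rho_t$ for every history $\F_t$ and every $t \geq 1$. Since this is a pointwise bound on the traces rather than a statement about their expectation, I would prove it by a short chain of two elementary inequalities that hold for each realized trajectory, so no distributional facts about $\F_t$ are needed.

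First I would discard the $\lambda$ factor. Because $\beta_0 \coloneqq 1$ and each $\rho_k \geq 0$, a trivial induction shows $\beta_{t-1} \geq 0$, and hence the inner term $\min(1, \beta_{t-1}\rho_t)$ is nonnegative; multiplying a nonnegative quantity by $\lambda \leq 1$ cannot increase it, giving $\beta_t = \lambda \min(1, \beta_{t-1}\rho_t) \leq \min(1, \beta_{t-1}\rho_t)$. Then I would apply the defining property of the minimum, namely $\min(1, \beta_{t-1}\rho_t) \leq \beta_{t-1}\rho_t$, which holds because $\beta_{t-1}\rho_t$ is one of the two arguments of the $\min$. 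Chaining the two bounds yields $\beta_t \leq \beta_{t-1}\rho_t$, exactly \Cref{cond:convergence}.

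There is essentially no obstacle in this argument; the only point meriting a line of care is the nonnegativity of $\beta_{t-1}$, since it is what makes the $\lambda$-contraction step monotone. As noted, this follows by induction from $\beta_0 = 1$ together with the observation that the recursive step $\lambda \min(1, \cdot)$ preserves nonnegativity of nonnegative inputs. With convergence condition verified, \Cref{theorem:evaluation,theorem:control} then apply to Recursive Retrace as an immediate corollary, settling its open convergence question for both policy evaluation and control.
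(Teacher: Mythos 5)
Your argument is correct and is essentially the paper's own proof: both reduce to the one-line observation that $\beta_t = \lambda \min(1,\ \beta_{t-1}\rho_t) \leq \beta_{t-1}\rho_t$ since $\lambda \leq 1$ and the minimum is bounded by its second argument. The only cosmetic difference is that the paper begins from the per-decision form $\beta_t = c_t\beta_{t-1}$ with $c_t = \lambda\min(1/\beta_{t-1},\ \rho_t)$ and first rewrites it into the form you start from, while you add the (harmless, and slightly more careful) explicit note that $\beta_{t-1} \geq 0$.
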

\proofglue
\begin{proof}
    For Recursive Retrace,
$\beta_t = c_t \beta_{t-1}$,
where
$c_t = \lambda \smash{\min\!\left(\frac{1}{\beta_{t-1}},~\rho_t\right)}$ \citep[][Eq.~9]{munos2016safe}.
This means
\begin{align}
    \nonumber
    \beta_t
    &= \lambda \min\!\left(\frac{1}{\beta_{t-1}},~\rho_t \right) \beta_{t-1} \\
    \nonumber
    &= \lambda \min\left(1,~\beta_{t-1} \rho_t \right) \\
    &\leq \beta_{t-1} \rho_t
    ,
\end{align}
which is the bound required by \Cref{cond:convergence}.

\end{proof}
\proofglue
Unfortunately, the traces for Truncated IS do not always satisfy the required bound.
\begin{restatable}{proposition}{truncatedis}
    \label{prop:truncated_is}
    Truncated IS may violate \Cref{cond:convergence}.
\end{restatable}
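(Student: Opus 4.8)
The plan is to disprove the universally quantified inequality in \Cref{cond:convergence} by exhibiting a single admissible importance-ratio sequence and a single index $t$ at which it fails; since the condition must hold for every history $\F_t$ and every $t \geq 1$, one counterexample suffices. First I would recall that Truncated IS uses $\beta_t = \lambda^t \min(1, \Pi_t)$, and rewrite the required bound $\beta_t \leq \beta_{t-1}\rho_t$ in a per-step form. Assuming $\lambda > 0$ and dividing through by $\lambda^{t-1}$, the condition is equivalent to $\lambda \min(1, \Pi_t) \leq \rho_t \min(1, \Pi_{t-1})$, which isolates exactly the comparison I need to break.

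The key observation---and the crux of why the bound fails---is that the truncation $\min(1, \cdot)$ caps the trace at $1$ and thereby discards any overshoot $\Pi_{t-1} > 1$ accumulated by earlier ratios. Once the running product has been clipped, a subsequent small ratio $\rho_t < 1$ shrinks the right-hand side to $\rho_t \min(1,\Pi_{t-1}) = \rho_t$, yet the left-hand side can remain as large as $\lambda$ whenever $\Pi_t$ stays at or above $1$. Concretely, I would restrict attention to the regime $\Pi_{t-1} \geq 1$, in which $\min(1,\Pi_{t-1}) = 1$ and the inequality collapses to $\lambda \min(1,\Pi_t) \leq \rho_t$; choosing $\rho_t < \lambda$ while keeping $\Pi_t \geq 1$ then violates it.

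To make this explicit, I would take $\lambda = 1$ with the first two ratios $\rho_1 = 2$ and $\rho_2 = \tfrac12$, so that $\Pi_1 = 2$ and $\Pi_2 = 1$. Then $\beta_1 = \min(1,2) = 1$ and $\beta_2 = \min(1,1) = 1$, whereas $\beta_1 \rho_2 = \tfrac12$, giving $\beta_2 = 1 > \tfrac12 = \beta_1 \rho_2$ and violating \Cref{cond:convergence} at $t = 2$. It remains only to confirm that these ratios are realizable, which is immediate: e.g.\ $\pi(A_1|S_1) = 0.6,\ \mu(A_1|S_1) = 0.3$ yields $\rho_1 = 2$, and $\pi(A_2|S_2) = 0.3,\ \mu(A_2|S_2) = 0.6$ yields $\rho_2 = \tfrac12$, so the violation occurs along a genuine off-policy trajectory. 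I expect the only subtlety to be conceptual rather than computational---namely recognizing that clipping at the top destroys precisely the multiplicative slack the condition demands---and for robustness I would remark that the same two ratios break the bound for every $\lambda > \tfrac12$, since $\beta_2 = \lambda^2 > \lambda/2 = \beta_1 \rho_2$ in that range, confirming the failure is not an artifact of $\lambda = 1$.
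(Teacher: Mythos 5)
Your proposal is correct and takes essentially the same route as the paper: both exhibit a trajectory where the running IS product exceeds $1$ before and after step $t$ (so the clip is active and $\beta_t/\beta_{t-1}=\lambda$) while $\rho_t<\lambda$, violating \Cref{cond:convergence}. Your concrete instantiation ($\rho_1=2$, $\rho_2=\tfrac12$) and the explicit realizability check are just a slightly more worked-out version of the paper's parametric counterexample ($\Pi_{t-1}=2$, $\tfrac12<\rho_t<\lambda$).
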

\proofglue
\begin{proof}
    We show this by providing a counterexample.
Recall that Truncated IS has $\beta_t = \lambda^t \min (1,~\Pi_t)$.
Assume a trajectory $\F_t$ such that $\Pi_{t-1} = 2$ and
$\smash{\frac{1}{2} < \rho_t < \lambda}$.
(It is straightforward to define an MDP, behavior policy, and target policy to create such a trajectory.)
Because $\rho_t > 1 / \Pi_{t-1}$, then $\Pi_t = \Pi_{t-1} \rho_t > 1$.
Thus, $\beta_t = \lambda^t$ and $\beta_{t-1} = \lambda^{t-1}$,
and \Cref{cond:convergence} is violated because
$\frac{\beta_t}{\beta_{t-1}} = \lambda \not\leq \rho_t$.

\end{proof}
\proofglue
Because \Cref{theorem:evaluation,theorem:control} cannot be applied, the precise conditions under which Truncated IS converges remains an open problem.
We do know \Cref{cond:convergence} is sufficient for convergence, but it is unlikely to be strictly necessary.
This is because our proofs of \Cref{theorem:evaluation,theorem:control} use this assumption to guarantee that the matrix $Z$ in \Cref{eq:M_op_linear_error} has nonnegative elements, making it straightforward to show that its row sums are sufficiently bounded to guarantee that $\M$ is a contraction mapping.
However, $\M$ could remain a contraction mapping even when $Z$ has negative elements, so long as $\norm{Z} < 1$.
It could theoretically be the case for Truncated IS that $Z$ occasionally contains negative elements but $\norm{Z}$ is still bounded enough to permit convergence.

Nevertheless, we are able to find at least one off-policy problem for which this is not true, implying that certain initializations of the value function could ultimately cause Truncated IS to diverge.
\begin{counterexample}[Off-Policy Truncated IS]
    \label{counterexample:truncated_is}
    Consider Truncated IS with $\lambda=1$, so
    $\beta_t = \min(1,~\Pi_t)$.
    Assume the MDP has one state and two actions:
    $\mathcal{S} = \{s\}$ and $\mathcal{A} = \{a_1,a_2\}$, the behavior policy $\mu$ is uniform random, and $\pi$ selects $a_1$ with probability $p \in (0,1)$ and selects $a_2$ otherwise.
    When $p=0.6$ and $\gamma=0.94$, then $\norm{Z} > 1$.
\end{counterexample}
Many choices of $p$ and $\gamma$ make $\norm{Z} > 1$, but we discuss specific ones for the counterexample in \Cref{app:counterexample_truncated_is}.

Compared to the per-decision case, where \citet{munos2016safe} showed that arbitrary trace cuts always produce a convergent algorithm, this result is surprising.
Why would the analogous result---in which we ensure that $\beta_t \leq \Pi_t$ for all timesteps---not hold here?
After all, clipping $\beta_t$ such that it never exceeds the IS estimate $\Pi_t$ would be expected to simply incur bias in the return estimation.
For some insight, assume the following expectations are conditioned on ${(S_0,A_0)=(s,a)}$, and observe that \Cref{eq:M_op_def} is equivalent to
\begin{equation*}
    \Emu{\Bigg}{\sum_{t=0}^\infty \gamma^t \beta_t R_t} + \Emu{\Bigg}{\sum_{t=1}^\infty \gamma^t (\beta_{t-1} \rho_t - \beta_t) Q(S_t,A_t)}
    .
\end{equation*}
We show the derivation in \Cref{app:M_op}.
The first term is a (partially) bias-corrected estimate of the discounted return.
The second term is a weighted combination of value-function bootstraps, whose weights are nonnegative when \Cref{cond:convergence} is met.
If the condition is violated on any timestep, then we may actually be subtracting bootstraps from the return estimate, which does not seem sensible.
We believe this is related to the root cause of divergence in \Cref{counterexample:truncated_is};
however, it remains open whether \Cref{cond:convergence} is necessary or merely sufficient.

As our next counterexample example will demonstrate, this effect can even cause divergence in \emph{on-policy} settings.
\begin{counterexample}[On-Policy Binary Traces]
    \label{counterexample:binary}
    Assume the MDP has one state and two actions: $\mathcal{S} = \{s\}$ and $\mathcal{A} = \{a_1,a_2\}$.
    Define a trajectory-aware method such that $\beta_t=1$ if $A_t=a_1$ and $\beta_t=0$ if $A_t=a_2$ (without loss of generality).
    Assume $\pi$ and $\mu$ are uniform random.
    When $\gamma \geq \frac{2}{3}$, then $\norm{Z} \geq 1$.
\end{counterexample}
We provide details in \Cref{app:counterexample_binary}.
Even though $\beta_t \leq \Pi_t = 1$ always, we are able to produce a non-contraction.
The method either fully cuts a trace or does not cut it at all, producing backups that consist of a sparse sum of on-policy TD errors.
It is therefore surprising that divergence occurs.
For the same reason we described above, the non-Markov nature of the trace appears to sometimes cause adverse bootstrapping effects;
in this instance, the ability to examine each trajectory allows the method to strategically de-emphasize certain state-action pairs, ultimately producing a detrimental effect on learning.
Notice that \Cref{cond:convergence} is indeed violated in this case because there is always some chance that $\beta_t = 1$ after $\beta_{t-1} = 0$.
If we add the restriction that $\beta_{t-1} = 0 \implies \beta_t = 0$, i.e., we permanently cut the traces, then convergence is reestablished by \Cref{theorem:evaluation}.

\subsection{Discussion}
\label{sect:discussion}

In this section, we summarize our main theoretical contributions and their significance.
We focused on characterizing the contraction properties of the $\M$ operator, both for policy evaluation and control, in the tabular setting.
These results parallel those for the $\mathcal{R}$ operator underlying Retrace, where $\M$ is a strict generalization of $\mathcal{R}$.
These results indicate that using fixed-point updates, like dynamic programming and temporal difference learning updates, may have divergence issues.
It does not, however, imply other algorithms, such as gradient-based algorithms, cannot find these fixed points.
We show the fixed points still exist and are unbiased, but that algorithms based on iterating with the $\M$ operator might diverge.
\looseness=-1

\textbf{Removal of the Markov assumption.}
Removing the Markov (per-decision) assumption of the $\mathcal{R}$ operator \citep{munos2016safe} to enable trajectory-aware eligibility traces was our primary goal.
When the trace factors are Markov, the operator $B_t$ is independent of $t$, allowing the sum
$\sum_{t=0}^\infty \gamma^t B_t$
to be reduced to
$\sum_{t=0}^\infty (\gamma P_{c\mu})^t$
for a linear operator
$P_{c\mu}$.
The resulting geometric series can then be evaluated analytically, as was done by \citet{munos2016safe}.
In our proofs, we avoided the Markov assumption by directly analyzing the infinite summation, which generally does not have a closed-form expression.
Our work is the first to do this, establishing the first convergence guarantees for general trajectory-aware methods.

\textbf{Arbitrary initialization of the value function.}
We permit any initialization of $Q_0$ in the control setting.
In contrast, \citet{munos2016safe} made the assumption that
$T_{\pi_0} Q_0 - Q_0 \geq 0$
in order to produce a lower bound on $\mathcal{R}_i Q_i - Q^*$, accomplished in practice by a pessimistic initialization of the value function:
$Q_0(s,a) = - \norm{R} \mathbin{/} (1-\gamma)$,
$\forall~(s,a) \in \mathcal{S} \times \mathcal{A}$.
Since $\mathcal{R}$ is a special case of our operator $\M$ where each trace $\beta_t$ factors into Markov coefficients, we deduce as a corollary that Retrace and all other algorithms described by $\mathcal{R}$ do not require pessimistic initialization for convergence.
\looseness=-1

\textbf{Greedy-in-the-limit policies.}
Our requirement of greedy-in-the-limit target policies in \Cref{theorem:control} is less restrictive than the increasingly greedy policies proposed by \citet{munos2016safe}.
We need only
$\lim_{i \to \infty} T_{\pi_i} Q_i = T Q_i$,
and we do not force the sequence of target policies to satisfy
$P_{\pi_{i+1}} Q_{i+1} \geq P_{\pi_i} Q_{i+1}$.
This implies that the agent may target non-greedy policies for any finite period of time, as long as the policies do eventually become arbitrarily close to the greedy policy.
As a corollary, increasingly greedy policies are not necessary for the optimal convergence of Retrace and other per-decision methods.

\section{Recency-Bounded Importance Sampling}
\label{sect:rbis}

\Cref{theorem:evaluation} guarantees convergence to $Q^\pi$ whenever
\Cref{cond:convergence} holds, but we do not expect that all choices of coefficients that satisfy this condition will perform well in practice.
At one extreme, if
$\smash{\beta_t \leq \prod_{k=1}^t \lambda \min(1,~\rho_k)}$
for every $\mathcal{F}_t$, then we have a method that cuts coefficients more aggressively than Retrace does;
it seems unlikely that such a method would learn faster than Retrace, or other per-decision methods.
At the other extreme, when
$\beta_t = \Pi_t$
for every $\mathcal{F}_t$, we recover the standard IS estimator, which suffers from high variance and is often ineffectual.
We therefore know that it is possible to have a method that preserves traces \emph{too much}, to the point of being detrimental.
Thus, it is important to maintain some minimum efficiency by avoiding unnecessary cuts, yet equally important to control the overall variance of the traces.

Intuitively, we want something that falls between Retrace and IS in terms of trace cutting, in order to quickly backpropagate credit while still managing the variance.
We further hypothesize that effective trajectory-aware methods will first compute $\beta_{t-1} \rho_t$---i.e., the maximum trace permitted by \Cref{cond:convergence}---and then apply some transformation that limits its magnitude to reduce variance.
This ensures that traces are cut only as needed.

We propose one method, Recency-Bounded Importance Sampling (RBIS), which achieves this by cutting the traces only when they exceed an exponentially decaying threshold.
Specifically, we define
\begin{equation}
    \tag{RBIS}
    \beta_t = \min (\lambda^t,~\beta_{t-1} \rho_t)
    .
\end{equation}
It is easy to see that RBIS always converges, by construction.
\begin{restatable}{proposition}{rbis}
    \label{prop:rbis}
    RBIS satisfies \Cref{cond:convergence}.
\end{restatable}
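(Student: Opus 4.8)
The plan is to verify \Cref{cond:convergence} directly from the defining recursion of RBIS. That condition asks only that $\beta_t \le \beta_{t-1}\rho_t$ for every history $\mathcal{F}_t$ and every $t \ge 1$, i.e., that the effective per-decision factor $\beta_t / \beta_{t-1}$ never exceeds $\rho_t$. The key observation is that RBIS is defined as $\beta_t = \min(\lambda^t,\, \beta_{t-1}\rho_t)$, so the quantity $\beta_{t-1}\rho_t$---which is exactly the ceiling imposed by \Cref{cond:convergence}---appears explicitly as one of the two arguments of the minimum.

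First I would invoke the elementary fact that the minimum of two real numbers never exceeds either argument: $\min(x,y) \le y$. Applying this with $x = \lambda^t$ and $y = \beta_{t-1}\rho_t$ immediately yields $\beta_t = \min(\lambda^t,\, \beta_{t-1}\rho_t) \le \beta_{t-1}\rho_t$, which is precisely the inequality demanded by \Cref{cond:convergence}. Because this holds for every $t \ge 1$ irrespective of the realized trajectory $\mathcal{F}_t$, the condition is satisfied. Convergence for both policy evaluation and control then follows as a corollary of \Cref{theorem:evaluation,theorem:control}, since \Cref{cond:convergence} is sufficient to invoke both.

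There is essentially no obstacle here: RBIS was engineered so that the permitted ceiling $\beta_{t-1}\rho_t$ is itself taken as a clipping argument, making the bound automatic rather than something that must be argued. The only point worth a sentence of care is that both arguments of the minimum are nonnegative---$\lambda^t \ge 0$ since $\lambda \in [0,1]$, and $\beta_{t-1}\rho_t \ge 0$ by induction from $\beta_0 = 1$ together with $\rho_t \ge 0$---so that the resulting trace remains nonnegative and stays within the regime covered by the theory. With that in hand, the proof reduces to the single display $\beta_t = \min(\lambda^t,\, \beta_{t-1}\rho_t) \le \beta_{t-1}\rho_t$.
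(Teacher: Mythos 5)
Your proof is correct and matches the paper's own one-line argument exactly: since $\beta_{t-1}\rho_t$ is one of the two arguments of the $\min$, the bound $\beta_t \le \beta_{t-1}\rho_t$ is immediate. The extra remark on nonnegativity is harmless but not needed for the paper's statement.
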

\proofglue
\begin{proof}
    $\beta_t = \min(\lambda^t,~\beta_{t-1} \rho_t) \leq \beta_{t-1} \rho_t$.

\end{proof}
\proofglue
For further insight, we unroll the recursion to obtain
\begin{align}
    \nonumber
    &\min(\lambda^t \!,~\beta_{t-1} \rho_t) \\
    \nonumber
    &\qquad = \min(\lambda^t,~\min(\lambda^{t-1},~\beta_{t-2} \rho_{t-1}) \rho_t) \\
    \nonumber
    &\qquad \cdots \\
    \label{eq:unroll}
    &\qquad = \min(\lambda^t,~\lambda^{t-1} \rho_t,~\lambda^{t-2} \rho_{t-1} \rho_t,~\dots,~\Pi_t)
    .
\end{align}
RBIS effectively takes the minimum of all past, discounted $n$-step IS estimates.
This reveals another property of RBIS:
its traces are never less than those of Retrace, because
\begin{align*}
    \prod_{k=1}^t \lambda \min(1,~\rho_k)
    &\leq \prod_{k=1}^t \min(\lambda,~\rho_k) \\
    &\leq \lambda^{t-j} \!\!\! \prod_{k=t-j+1}^t \!\!\! \rho_k
    ,~\forall~j \in \{0, 1, \dots, t\}
    .
\end{align*}
Since the inequality is true for all $j$, it is not possible for Retrace's traces to exceed any of the arguments to the $\min$ function in \Cref{eq:unroll}.
We have achieved exactly what we wanted earlier:
a method that falls somewhere between Retrace and IS in regard to trace cutting.
This does not automatically mean that RBIS will outperform Retrace, though, since preserving the magnitude of the trace $\beta_t$ too much can lead to high variance.
However, we do expect RBIS to perform well in decision-making problems in which a few critical actions largely determine the long-term outcome of an episode.
In such scenarios, the agent's bottleneck to learning is its ability to assign meaningful credit to these critical actions over a potentially long time horizon.

In order to test this empirically, we construct an environment called the Bifurcated Gridworld (see \Cref{fig:lambda_sweep}).
This $5 \times 5$ deterministic gridworld has walls arranged such that two unequal-length paths from the start (S) to the goal (G) are available.
The agent may move up, down, left, or right;
taking any of these actions in the goal yields a reward of $+1$ and terminates the episode.
The problem is discounted ($\gamma = 0.9$) to encourage the agent to learn the shorter path.
Importantly, the action taken at the bifurcation (B) solely determines which path the agent follows, and quickly assigning credit to this state is paramount to learning the task.

We compare RBIS against Retrace, Truncated IS, and Recursive Retrace when learning this task from off-policy data.
Both behavior and target policies were $\epsilon$-greedy with respect to the value function $Q$.
The target policy used $\epsilon=0.1$.
The behavior policy used a piecewise schedule:
$\epsilon=1$ for the first 5 episodes and then $\epsilon=0.2$ afterwards.
The agents learned from online TD updates with eligibility traces (see \Cref{app:implementation} for pseudocode).
The policies were updated only at the end of each episode, and then the discounted return obtained by a near-greedy policy ($\epsilon=0.05$) was evaluated.
The area under the curve (AUC) of each resulting learning curve was calculated, with the highest AUC achieved over a grid search of stepsizes being plotted for each $\lambda$-value in \Cref{fig:lambda_sweep}.
We averaged the results over 1,000 independent trials and indicate the 95\% confidence interval by the shaded regions.
In \Cref{app:additional_results}, we repeated the experiment for three more gridworld topologies;
the obtained results are qualitatively similar to \Cref{fig:lambda_sweep}.
Our experiment code is available online.\footnote{\url{https://github.com/brett-daley/trajectory-aware-etraces}}
\looseness=-1

We make several observations regarding the results in \Cref{fig:lambda_sweep}.
First, the peak performance obtained by RBIS is significantly higher than that of the other three methods.
This is notable because both Truncated IS and Recursive Retrace are also trajectory aware, indicating that different implementations of trajectory awareness are beneficial to varying degrees.
In particular, the preservation of long-term eligibilities is not sufficient on its own to guarantee strong performance in general, as it appears that \emph{when} and \emph{how much} the traces are cut are important considerations as well.
The role of $\lambda$ as a decay hyperparameter is evidently critical for all methods to achieve their maximum performance, since $\lambda=1$ never leads to the fastest learning.
In fact, $\lambda \to 1$ is especially catastrophic for Truncated IS, which we believe is related to the divergence issue identified in \Cref{sect:examples_of_convergence}.
Finally, Retrace degrades less for larger $\lambda$, likely because it cuts traces more. 
It would be interesting to develop a trajectory-aware method that obtains the robustness of RBIS but also accounts for larger $\lambda$-values. 

\begin{figure}[t]
    \centering
    \includegraphics[width=\columnwidth]{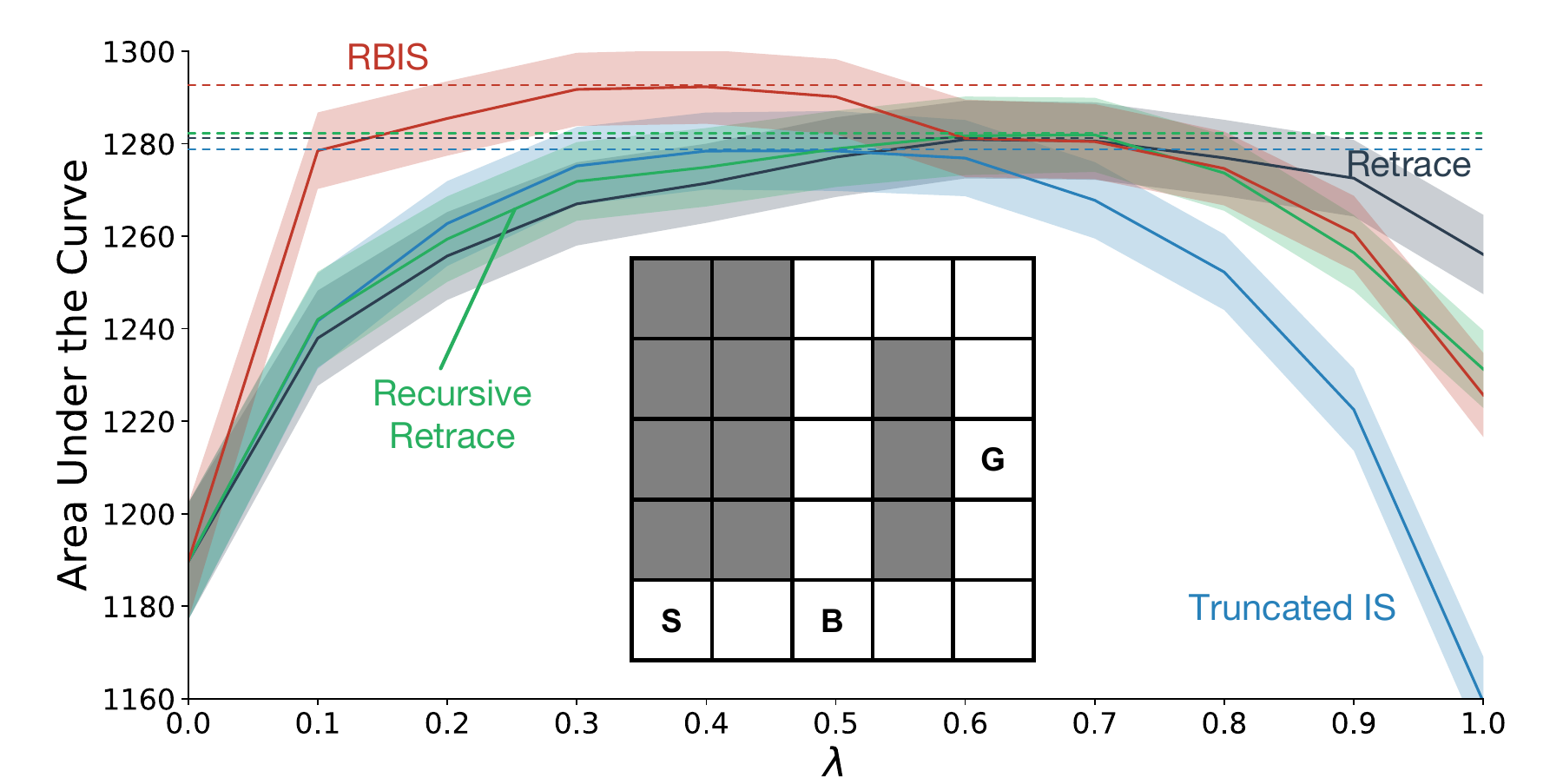}
    \vspace{-0.2in}
    \caption{
        The Bifurcated Gridworld environment.
        The choice made at B greatly impacts the discounted return ultimately earned.
        We plot the AUC obtained by four off-policy methods across the $\lambda$-spectrum.
        The dashed horizontal lines mark the highest AUC achieved by each method.
    }
    \label{fig:lambda_sweep}
    \vspace{-0.1in}
\end{figure}

\section{Conclusion}
\label{sect:conclusion}

In this work, we extended theory for per-decision eligibility traces to trajectory-aware traces.
This extension allows us to consider a broader family of algorithms, with more flexibility in obtaining off-policy corrections.
Specifically, we introduced the $\M$ operator as a generalization of the $\mathcal{R}$ operator, and a sufficient condition to ensure convergence under $\M$.
Using our general result, we established the first convergence guarantee for an existing trajectory-aware method, Recursive Retrace, in the control setting.
We also showed that Truncated IS may violate our condition and provided a counterexample showing that it can diverge.

We also proposed a new trajectory-aware method, RBIS, that demonstrates one instance of how trajectory awareness can be utilized for faster learning in off-policy control tasks.
RBIS is able to outperform the other trajectory-aware methods that we tested in the Bifurcated Gridworld, suggesting that it possesses at least one unique property that is beneficial for long-term, off-policy credit assignment.
It would be interesting to search for additional beneficial properties in future work, in order to better characterize off-policy methods that reliably lead to efficient and stable learning in challenging reinforcement learning environments.

This work focused on convergence \emph{in expectation};
a natural next step is to extend this result to the stochastic algorithms used in practice. 
Previous results for TD learning rely primarily on the properties of the expected update, with additional conditions on the noise in the update and appropriately annealed step sizes \citep[see][Section~4.3]{bertsekas1996neuro}.
Similar analysis should be applicable, given that we know the expected update with $\M$ is a contraction mapping when \Cref{cond:convergence} is met.

An important next step is extending these methods and results to function approximation.
Incorporating these traces into deep reinforcement learning methods that rely on experience replay \citep{lin1992self} should be straightforward.
Multistep returns can be computed offline in the replay memory, and then randomly sampled in minibatches to train the neural network.
Using a TD learning update, though, can suffer from convergence issues under function approximation and off-policy learning;
this has been previously resolved by developing gradient-based updates  \citep{sutton2009fast,touati2018convergent}.
An important next step is to develop a gradient-based trajectory-aware algorithm. 
The contraction properties of the operator still impact the quality of the solution, as has been shown to be the case for other off-policy approaches \citep{patterson2022ageneralized}.
The insights in this work, therefore, may provide insights on how to get quality solutions with gradient-based approaches.

{\fontsize{9.5pt}{\baselineskip} \selectfont
\bibliography{main}
}
\bibliographystyle{icml2023}

\newpage
\appendix
\onecolumn
\section{$\M$ Operator Details}
\label{app:M_op}

In \Cref{sect:analysis}, we defined a linear operator $B_t$, where
\begin{equation}
    \tag{\ref{eq:B_op_def}}
    (B_t X)(s,a) = \expect{\big}{\mu}{\beta_t X(S_t,A_t)}{(S_0,A_0)=(s,a)}
    ,
\end{equation}
such that the expected-value version of our $\M$ operator,
\begin{align}
    \tag{\ref{eq:M_op_def}}
    (\M Q)(s,a) &= Q(s,a) + \expect{\Bigg}{\mu}{\sum_{t=0}^\infty \gamma^t \beta_t \delta^\pi_t}{(S_0,A_0)=(s,a)} \\
    \tag{\ref{eq:M_op_def_linearity}}
    &= Q(s,a) + \sum_{t=0}^\infty \gamma^t \expect{\big}{\mu}{\beta_t \delta^\pi_t}{(S_0,A_0)=(s,a)},
\end{align}
is element-wise equivalent to the vector version,
\begin{equation}
    \tag{\ref{eq:M_op_def_vector}}
    \mathcal{M}Q = Q + \sum_{t=0}^\infty \gamma^t B_t (T_\pi Q - Q)
    .
\end{equation}
We claimed that each element of $B_t$ must have the form
\begin{align}
    \tag{\ref{eq:B_op_def_elements}}
    B_t((s,a),(s',a')) =
    \Pr_{\mu}((S_t,A_t)=(s',a') \mid (S_0,A_0)=(s,a))
    \times \expect{\big}{\mu}{\beta_t}{(S_0,A_0)=(s,a), (S_t,A_t)=(s',a')}
    ,
\end{align}
with $(s,a)$ as the row index and $(s',a')$ as the column index.
This is because multiplying this matrix $B_t$ with a vector $X$ results in the same operation as the weighted expected value in \Cref{eq:M_op_def_linearity}:
\begin{align}
    \nonumber
    \sum_{s',a'} B_t((s,a),(s',a')) X(s',a')
    &= \expect{\Bigg}{\mu}{
        \expect{\big}{\mu}{\beta_t}{(S_0,A_0)=(s,a),(S_t,A_t)} \cdot X(S_t,A_t)
    }{(S_0,A_0)=(s,a)} \\
    \nonumber
    &= \expect{\bigg}{\mu}{
        \expect{\big}{\mu}{\beta_t X(S_t,A_t)}{(S_0,A_0)=(s,a),(S_t,A_t)}
    }{(S_0,A_0)=(s,a)} \\
    &= \expect{\big}{\mu}{\beta_t X(S_t,A_t)}{(S_0,A_0)=(s,a)}
    .
\end{align}
So, when $X$ is the expected TD error $T_\pi Q - Q$, \Cref{eq:M_op_def_vector} becomes \Cref{eq:M_op_def_linearity} exactly.

$\M$ is a contraction mapping whenever $\beta_t \leq \beta_{t-1} \rho_t$ for all $t$ (\Cref{cond:convergence}), which \Cref{theorem:evaluation} establishes.
As we discussed in \Cref{sect:examples_of_convergence}, violating this condition can sometimes cause $\M$ to no longer contract, even with on-policy updates.
We can see one plausible reason for this by refactoring the definition of $\M$.
Let $q_t \coloneqq Q(S_t,A_t)$ and
$v_t \coloneqq \sum_{a' \in \mathcal{A}} \pi(a'|S_t) Q(S_t,a')$,
so $\delta^\pi_t = R_t + \gamma v_{t+1} - q_t$.
Further, assume the following expectations are conditioned on $(S_0,A_0)=(s,a)$.
\Cref{eq:M_op_def} is equivalent to
\begin{align}
    \nonumber
    (\M Q)(s,a)
    &= q_0 + \Emu{\Bigg}{\sum_{t=0}^\infty \gamma^t \beta_t (R_t + \gamma v_{t+1} - q_t)} \\
    \nonumber
    &= q_0 + \Emu{\Bigg}{\sum_{t=0}^\infty \gamma^t \beta_t R_t + \sum_{t=1}^\infty \gamma^t \beta_{t-1} v_t - \sum_{t=0}^\infty \gamma^t \beta_t q_t} \\
    \nonumber
    &= \Emu{\Bigg}{\sum_{t=0}^\infty \gamma^t \beta_t R_t + \sum_{t=1}^\infty \gamma^t \beta_{t-1} v_t - \sum_{t=1}^\infty \gamma^t \beta_t q_t} \\
    \nonumber
    &= \Emu{\Bigg}{\sum_{t=0}^\infty \gamma^t \beta_t R_t} + \Emu{\Bigg}{\sum_{t=1}^\infty \gamma^t (\beta_{t-1} v_t - \beta_t q_t)} \\
    &= \Emu{\Bigg}{\sum_{t=0}^\infty \gamma^t \beta_t R_t} + \Emu{\Bigg}{\sum_{t=1}^\infty \gamma^t (\beta_{t-1} \rho_t - \beta_t) q_t}
    ,
\end{align}
and we discussed in \Cref{sect:examples_of_convergence} that these two terms represent a biased return estimate and an infinite sum of weighted value-function bootstraps, respectively.
In particular, this can be problematic if $\beta_t > \beta_{t-1} \rho_t$ because the corresponding bootstrap's weight becomes negative, causing it to get subtracted from the return estimate.

\section{Additional Proofs}

\subsection{Proof of Lemma~\ref{lemma:fp_diff}}
\label{app:lemma_fp_diff}
\begin{lemma}
    \label{lemma:fp_diff}
    $Q^\pi$ is a fixed point of $\mathcal{M}$;
    the difference between $\mathcal{M}Q$ and $Q^\pi$ is given by
    \begin{equation}
        \mathcal{M}Q - Q^\pi = Z (Q - Q^\pi)
        ,
    \end{equation}
    where $Z \coloneqq \sum_{t=1}^\infty \gamma^t (B_{t-1} P_\pi - B_t)$.
\end{lemma}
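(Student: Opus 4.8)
The plan is to use the affine (vector) form of the operator in \Cref{eq:M_op_def_vector}, namely $\mathcal{M}Q = Q + \sum_{t=0}^\infty \gamma^t B_t (T_\pi Q - Q)$, from which both claims follow from a single computation. First I would establish the fixed-point property by direct substitution: setting $Q = Q^\pi$ makes the Bellman residual $T_\pi Q^\pi - Q^\pi$ vanish, since $Q^\pi$ solves $T_\pi Q^\pi = Q^\pi$, so every term of the series is zero and $\mathcal{M}Q^\pi = Q^\pi$. Because $\mathcal{M}$ is affine in $Q$, this immediately lets me rewrite the error as $\mathcal{M}Q - Q^\pi = \mathcal{M}Q - \mathcal{M}Q^\pi$, which isolates the linear part.

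Next I would compute that linear part. Writing $T_\pi Q = R + \gamma P_\pi Q$, the constant reward terms cancel in the difference and the Bellman-residual difference simplifies to $(T_\pi Q - Q) - (T_\pi Q^\pi - Q^\pi) = (\gamma P_\pi - I)(Q - Q^\pi)$. Substituting and abbreviating $D := Q - Q^\pi$ gives $\mathcal{M}Q - Q^\pi = D + \sum_{t=0}^\infty \gamma^t B_t (\gamma P_\pi - I) D$. The crux is then to reorganize the series: splitting it into two pieces, shifting the index of the $P_\pi$ piece by one so that $\sum_{t=0}^\infty \gamma^{t+1} B_t P_\pi = \sum_{t=1}^\infty \gamma^t B_{t-1} P_\pi$, and peeling the $t=0$ term off the other piece using $B_0 = I$. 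The resulting $-I$ cancels the leading $D$ exactly, leaving $\mathcal{M}Q - Q^\pi = \sum_{t=1}^\infty \gamma^t (B_{t-1} P_\pi - B_t) D = Z D$, which is the claimed identity.

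The main obstacle is not the algebra but justifying the term-by-term rearrangement and re-indexing, which is only legitimate under absolute convergence of the series in the $\infty$-norm. To secure this I would bound the operators $B_t$: under \Cref{cond:convergence} we have $\beta_t \le \beta_{t-1}\rho_t \le \cdots \le \Pi_t$, and since the telescoping importance-sampling identity gives $\mathbb{E}_\mu[\Pi_t \mid (S_0,A_0)=(s,a)] = 1$ (each factor $\rho_k$ integrates to $1$ over $A_k \sim \mu$), the nonnegative row sums satisfy $B_t \ones \le \ones$, i.e.\ $\norm{B_t} \le 1$. Together with $\norm{P_\pi} = 1$ and $\gamma < 1$, this makes both $\sum_t \gamma^t B_t$ and $\sum_t \gamma^t B_t P_\pi$ absolutely convergent, so all the manipulations above are valid and the two tails $\sum_{t=1}^\infty \gamma^t B_{t-1}P_\pi$ and $\sum_{t=1}^\infty \gamma^t B_t$ may be combined into the single operator $Z$ without issue.
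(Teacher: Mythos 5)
Your proposal is correct and follows essentially the same route as the paper's proof: use the fixed-point property via $T_\pi Q^\pi = Q^\pi$, write $\mathcal{M}Q - Q^\pi = \mathcal{M}Q - \mathcal{M}Q^\pi$, use $T_\pi Q - T_\pi Q^\pi = \gamma P_\pi(Q - Q^\pi)$, shift the index, and cancel the $t=0$ term against the leading $Q - Q^\pi$ using $B_0 = I$. Your additional justification of the series rearrangement via $\norm{B_t} \leq 1$ under \Cref{cond:convergence} is a welcome extra degree of rigor that the paper leaves implicit, though note the lemma as stated does not formally assume that condition.
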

\begin{proof}
    It is evident from \Cref{eq:M_op_def_vector} that $Q^\pi$ is a fixed point of $\mathcal{M}$ because $T_\pi Q^\pi - Q^\pi = 0$, and so $\mathcal{M}Q^\pi = Q^\pi$.
Therefore,
\begin{align*}
    \allowdisplaybreaks
    \mathcal{M}Q - Q^\pi
    &= \mathcal{M}Q - \mathcal{M}Q^\pi \\
    &= Q + \sum_{t=0}^\infty \gamma^t B_t (T_\pi Q - Q) - Q^\pi - \sum_{t=0}^\infty \gamma^t B_t (T_\pi Q^\pi - Q^\pi) \\
    &= Q - Q^\pi + \sum_{t=0}^\infty \gamma^t B_t (T_\pi Q - T_\pi Q^\pi) - \sum_{t=0}^\infty \gamma^t B_t (Q - Q^\pi) \\
    &= \sum_{t=0}^\infty \gamma^t B_t (T_\pi Q - T_\pi Q^\pi) - \sum_{t=1}^\infty \gamma^t B_t (Q - Q^\pi) \\
    &= \sum_{t=0}^\infty \gamma^{t+1} B_t P_\pi (Q - Q^\pi) - \sum_{t=1}^\infty \gamma^t B_t (Q - Q^\pi) \\
    &= \left( \sum_{t=0}^\infty \gamma^{t+1} B_t P_\pi - \sum_{t=1}^\infty \gamma^t B_t \right) (Q - Q^\pi) \\
    &= \left( \sum_{t=1}^\infty \gamma^t (B_{t-1} P_\pi - B_t) \right) (Q - Q^\pi) \\
    &= Z (Q - Q^\pi)
    ,
\end{align*}
which is the desired result.

\end{proof}

\subsection{Proof of Lemma~\ref{lemma:Z_row_sums}}
\label{app:lemma_Z_row_sums}
\begin{lemma}
    \label{lemma:Z_row_sums}
    If \Cref{cond:convergence} holds, then $Z$ has nonnegative elements and its row sums obey $Z \ones \leq \gamma$.
\end{lemma}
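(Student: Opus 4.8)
The plan is to reduce both claims to a single identity describing how the operator $B_{t-1}P_\pi - B_t$ acts on an arbitrary vector $X \in \mathbb{R}^n$. First I would show that, conditioned on $(S_0,A_0)=(s,a)$,
\[
    \big((B_{t-1}P_\pi - B_t)X\big)(s,a) = \mathbb{E}_\mu\!\big[(\beta_{t-1}\rho_t - \beta_t)\,X(S_t,A_t)\big].
\]
The $B_t$ term is immediate from \Cref{eq:B_op_def}. For the $B_{t-1}P_\pi$ term, I would rewrite $(P_\pi X)(S_{t-1},A_{t-1})$ as a change of measure from $\pi$ to $\mu$ at step $t$: since transitions are policy-independent, multiplying and dividing by $\mu$ gives $(P_\pi X)(S_{t-1},A_{t-1}) = \mathbb{E}_\mu[\rho_t X(S_t,A_t)\mid S_{t-1},A_{t-1}]$. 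Because $\beta_{t-1}$ is a function of $\F_{t-1}$, the tower property lets me fold $\beta_{t-1}$ into this inner conditional expectation, yielding $\mathbb{E}_\mu[\beta_{t-1}\rho_t X(S_t,A_t)\mid (S_0,A_0)=(s,a)]$; subtracting the $B_t$ term gives the identity. This step is the crux and the main obstacle: it is exactly where the non-Markov trace $\beta_{t-1}$ must be handled, and it relies on the $\F_{t-1}$-measurability of $\beta_{t-1}$ together with the fact that $\rho_t$ is the correct one-step reweighting (so that, e.g., $\mathbb{E}_\mu[\rho_t\mid\F_{t-1}]=1$).

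Given the identity, nonnegativity of $Z$ is straightforward. \Cref{cond:convergence} states $\beta_t \le \beta_{t-1}\rho_t$ for every $\F_t$, so the random weight $\beta_{t-1}\rho_t - \beta_t$ is nonnegative along every trajectory. Reading off the matrix entries via \Cref{eq:B_op_def_elements}, each entry of $B_{t-1}P_\pi - B_t$ is a nonnegative transition probability times a conditional expectation of this nonnegative weight, hence nonnegative. Since $Z = \sum_{t\ge1}\gamma^t(B_{t-1}P_\pi - B_t)$ is a nonnegatively weighted sum of these, $Z \ge 0$ entrywise, with absolute convergence of the series following by comparison with the row-sum bound established next.

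For the row sums I would evaluate the identity at $X = \ones$. Using that $P_\pi$ is row-stochastic ($P_\pi\ones = \ones$) and writing $b_t \coloneqq B_t\ones = \mathbb{E}_\mu[\beta_t\mid(S_0,A_0)=(s,a)]$, each term collapses to $(B_{t-1}P_\pi - B_t)\ones = b_{t-1}-b_t$, so that $Z\ones = \sum_{t\ge1}\gamma^t(b_{t-1}-b_t)$. Every summand is nonnegative and $\gamma^t \le \gamma$ for $t\ge1$, so I bound $Z\ones \le \gamma\sum_{t\ge1}(b_{t-1}-b_t)$, which telescopes to $\gamma\,(b_0 - \lim_{t\to\infty}b_t)$. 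Finally $b_0 = \mathbb{E}_\mu[\beta_0] = 1$, and $\lim_t b_t \ge 0$ because each trace $\beta_t$ is nonnegative, giving $Z\ones \le \gamma b_0 = \gamma$. Equivalently, setting $C \coloneqq \sum_{t\ge0}\gamma^t B_t$ one obtains the compact form $Z = I - C(I-\gamma P_\pi)$, whence $Z\ones = \ones - (1-\gamma)C\ones \le \gamma\ones$ using $C\ones \ge \ones$. Combining the two parts yields exactly $Z \ge 0$ and $Z\ones \le \gamma$, as claimed.
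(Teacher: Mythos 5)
Your proposal is correct and follows essentially the same route as the paper: the key identity $((B_{t-1}P_\pi - B_t)X)(s,a) = \mathbb{E}_\mu[(\beta_{t-1}\rho_t - \beta_t)X(S_t,A_t)]$ is exactly the paper's computation (the paper just writes the weight as $\pi(A_t|S_t)\beta_{t-1} - \mu(A_t|S_t)\beta_t$ before dividing through by $\mu$), nonnegativity follows identically from \Cref{cond:convergence}, and your row-sum bound---both the telescoping version and the ``equivalently'' $Z\ones = \ones - (1-\gamma)C\ones$ form---matches the paper's index-shift argument. No gaps.
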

\begin{proof}
    Define the linear operator
$D_t \coloneqq B_{t-1} P_\pi - B_t$
and notice that
${Z = \sum_{t=1}^\infty \gamma^t D_t}$.
We will show that $D_t$ comprises only nonnegative elements, and therefore so does $Z$.
For any $X \in \R^n$, observe that
\begin{align}
    \nonumber
    (D_t X)(s,a)
    &= \expect{\Bigg}{\mu}{\beta_{t-1} \sum_{S_t \in \mathcal{S}} \sum_{A_t \in \mathcal{A}} P(S_t|\F_{t-1}) \pi(A_t|S_t) X(S_t,A_t)}{(S_0,A_0)=(s,a)} \\
    \nonumber
    &\qquad - \expect{\big}{\mu}{\beta_t X(S_t,A_t)}{(S_0,A_0)=(s,a)} \\
    \nonumber
    &= \expect{\Bigg}{\mu}{\beta_{t-1} \sum_{S_t \in \mathcal{S}}
    \sum_{A_t \in \mathcal{A}} P(S_t|\F_{t-1}) \pi(A_t|S_t) X(S_t,A_t)}{(S_0,A_0)=(s,a)} \\
    \nonumber
    &\qquad - \expect{\Bigg}{\mu}{\sum_{S_t \in \mathcal{S}} \sum_{A_t \in \mathcal{A}} P(S_t|\F_{t-1}) \mu(A_t|S_t) \beta_t X(S_t,A_t)}{(S_0,A_0)=(s,a)} \\
    &= \expect{\Bigg}{\mu}{\sum_{S_t \in \mathcal{S}} P(S_t|\mathcal{F}_{t-1}) \sum_{A_t \in \mathcal{A}} \big( \pi(A_t|S_t) \beta_{t-1} - \mu(A_t|S_t) \beta_t \big) X(S_t,A_t)}{(S_0,A_0)=(s,a)}
    .
\end{align}
Since we assumed that
$\beta_t \leq \beta_{t-1} \rho_t$ in \Cref{cond:convergence},
we have
$\pi(A_t|S_t) \beta_{t-1} - \mu(A_t|S_t) \beta_t \geq 0$,
which implies that $D_t \geq 0$.
Furthermore, this holds for all $t \geq 1$, so $Z \geq 0$ follows immediately.

To complete the proof, we show that the row sums of $Z$ are bounded by $\gamma$.
Recall that $P_\pi \ones = \ones$.
Hence,
\begin{align}
    \nonumber
    Z \ones
    &= \sum_{t=1}^\infty \gamma^t (B_{t-1} P_\pi - B_t) \ones \\
    \nonumber
    &= \sum_{t=1}^\infty \gamma^t (B_{t-1} \ones - B_t \ones) \\
    \nonumber
    &= \sum_{t=0}^\infty \gamma^{t+1} B_t \ones - \sum_{t=1}^\infty \gamma^t B_t \ones \\
    \nonumber
    &= \gamma \ones + \sum_{t=1}^\infty \gamma^{t+1} B_t \ones - \sum_{t=1}^\infty \gamma^t B_t \ones \\
    \nonumber
    &= \gamma \ones - (1-\gamma) \sum_{t=1}^\infty \gamma^t B_t \ones \\
    &\leq \gamma \ones
    ,
\end{align}
because $B_t \geq 0$, $\forall~t \geq 1$.

\end{proof}

\subsection{Proof of Theorem~\ref{theorem:control}}
\label{app:theorem_control}
\control*
\begin{proof}
    We first derive the following upper bound:
\begin{equation}
    T_{\pi_i} Q_i - T Q^*
    = \gamma P_{\pi_i} Q_i - \gamma \max\limits_\pi P_\pi Q^*
    \leq \gamma P_{\pi_i} (Q_i - Q^*)
    .
\end{equation}
From \Cref{eq:Mi_op_def} and because $C_i$ has nonnegative entries, we can deduce that
\begin{align}
    \allowdisplaybreaks
    \label{eq:start_here}
    \M_i Q_i - Q^*
    &= (I - C_i)(Q_i - Q^*) + C_i (T_{\pi_i} Q_i - Q^*) \\
    \nonumber
    &= (I - C_i)(Q_i - Q^*) + C_i (T_{\pi_i} Q_i - T Q^*) \\
    \nonumber
    &\leq (I - C_i)(Q_i - Q^*) + \gamma C_i P_{\pi_i} (Q_i - Q^*) \\
    \label{eq:upper_bound}
    &= Z_i (Q_i - Q^*)
    ,
\end{align}
where $Z_i \coloneqq I - C_i (I - \gamma P_{\pi_i})$.
Notice that $Z_i$ is analogous to the matrix $Z$ in \Cref{eq:M_op_linear_error} because, for policies $\pi_i$ and $\mu_i$,
\begin{align}
    \nonumber
    I - C_i (I - \gamma P_{\pi_i})
    &= I + \sum_{t=0}^\infty \gamma^t B_t (\gamma P_{\pi_i} - I) \\
    \nonumber
    &= I + \sum_{t=0}^\infty \gamma^{t+1} B_t P_{\pi_i} - \sum_{t=0}^\infty \gamma^t B_t \\
    \nonumber
    &= \sum_{t=1}^\infty \gamma^t B_{t-1} P_{\pi_i} - \sum_{t=1}^\infty \gamma^t B_t \\
    &= \sum_{t=1}^\infty \gamma^t (B_{t-1} P_{\pi_i} - B_t)
    .
\end{align}
Next, we derive the following lower bound:
\begin{equation}
    T Q_i - T Q^*
    \geq T_{\pi^*} Q_i - T Q^*
    = \gamma P_{\pi^*} (Q_i - Q^*)
    .
\end{equation}
Additionally, for each policy $\pi_i$, there exists some $\epsilon_i \geq 0$ such that
$T_{\pi_i} Q_i \geq T Q_i - \epsilon_i \norm{Q_i} \ones$
(recall that we defined $\epsilon_i$ to be as small as possible).
Starting again from \Cref{eq:start_here},
and noting that the elements of $C_i$ are nonnegative, we obtain
\begin{align}
    \allowdisplaybreaks
    \nonumber
    \M_i Q_i - Q^*
    &\geq (I - C_i)(Q_i - Q^*) + C_i (T Q_i - Q^*) - \epsilon_i \norm{Q_i} C_i \ones \\
    \nonumber
    &= (I - C_i)(Q_i - Q^*) + C_i (T Q_i - T Q^*) - \epsilon_i \norm{Q_i} C_i \ones \\
    \nonumber
    &\geq (I - C_i)(Q_i - Q^*) + \gamma C_i P_{\pi^*} (Q_i - Q^*) - \epsilon_i \norm{Q_i} C_i \ones \\
    \label{eq:lower_bound}
    &= Z_i^* (Q_i - Q^*) - \epsilon_i \norm{Q_i} C_i \ones
    ,
\end{align}
where we have defined $Z_i^* \coloneqq I - C_i (I - \gamma P_{\pi^*})$.
By \Cref{lemma:Z_row_sums}, since we assumed \Cref{cond:convergence} holds, both $Z_i$ and $Z_i^*$ have nonnegative elements and their row sums are bounded by~$\gamma$.
Therefore, when
$\M_i Q_i - Q^* \geq 0$,
\Cref{eq:upper_bound} implies
\begin{equation}
    \label{eq:pos_bound}
    \norm{\M_i Q_i - Q^*}
    \leq \gamma \norm{Q_i - Q^*}
    ,
\end{equation}
because element-wise inequality for nonnegative matrices implies the inequality holds also for their norms.
When
${\M_i Q_i - Q^* \leq 0}$,
we must use \Cref{eq:lower_bound} and multiply both sides by $-1$ to get nonnegative matrices, giving
\begin{align}
    \nonumber
    \norm{\M_i Q_i - Q^*}
    &\leq \gamma \norm{Q_i - Q^*} + \epsilon_i \norm{Q_i} \norm{C_i} \\
    \label{eq:neg_bound}
    &\leq \gamma \norm{Q_i - Q^*} + \frac{\epsilon_i}{1-\gamma} \norm{Q_i}
    ,
\end{align}
because
$\norm{C_i} \leq \sum_{t=0}^\infty \gamma^t \norm{P_{\pi_i}}^t = (1-\gamma)^{-1}$.
Since \Cref{eq:neg_bound} is looser than \Cref{eq:pos_bound}, its bound holds in the worst case.
It remains to show that this bound implies convergence to $Q^*$.
Observe that
\begin{align}
    \nonumber
    \gamma \norm{Q_i - Q^*} + \frac{\epsilon_i}{1-\gamma} \norm{Q_i}
    &\leq \gamma \norm{Q_i - Q^*} + \frac{\epsilon_i}{1-\gamma} (\norm{Q_i - Q^*} + \norm{Q^*}) \\
    &= \left( \gamma + \frac{\epsilon_i}{1-\gamma} \right) \norm{Q_i - Q^*} + \frac{\epsilon_i}{1-\gamma} \norm{Q^*}
    .
\end{align}
Our assumption of greedy-in-the-limit policies tells us that $\epsilon_i \to 0$ as $i \to \infty$;
thus, there must exist some iteration $i^*$ such that
$\epsilon_i \leq \frac{1}{2} (1-\gamma)^2$, $\forall~i \geq i^*$.
Therefore, for $i \geq i^*$,
\begin{equation}
    \norm{\M_i Q_i - Q^*}
    \leq \frac{1 + \gamma}{2} \norm{Q_i - Q^*} + \frac{\epsilon_i}{1-\gamma} \norm{Q^*}
    .
\end{equation}
If $\gamma < 1$,
then
$\frac{1}{2} (1 + \gamma) < 1$, and since $\norm{Q^*}$ is finite, we conclude that $\norm{Q_i - Q^*} \to 0$ as $i \to \infty$.

\end{proof}

\section{Examples of Divergence}

\subsection{Counterexample~\ref{counterexample:truncated_is}: Off-Policy Truncated IS}
\label{app:counterexample_truncated_is}

Our definitions of $\pi$ and $\mu$ give us
\begin{align}
    P_\pi = \begin{bmatrix}
        p & 1-p\\
        p & 1-p
    \end{bmatrix}
    ,&&
    P_\mu = \frac{1}{2} \begin{bmatrix}
        1 & 1\\
        1 & 1
    \end{bmatrix}
    .
\end{align}
Recall that we assumed $\lambda = 1$.
We define the following constant, using the definition of $\beta_t$ for Truncated IS:
\begin{align}
    \nonumber
    \beta^{(1)}_t
    &\coloneqq \expect{\big}{}{\beta_t}{(S_t,A_t)=(s,a_1)} \\
    \nonumber
    &= \sum_{\F_t} \Pr\nolimits_\mu(\F_t\mid(S_t,A_t)=(s,a_1)) \cdot \min\left(1,~\frac{\Pr_\pi(\F_t)}{\Pr_\mu(\F_t)}\right)\\
    \label{eq:memoryless}
    &= \sum_{\F_{t-1}} \Pr\nolimits_\mu(\F_{t-1}) \min\left(1,~\frac{\Pr_\pi(\F_{t-1}) \cdot p}{\Pr_\mu(\F_{t-1}) \cdot \frac{1}{2}}\right) \\
    \nonumber
    &= \sum_{\F_{t-1}} \min\left(\Pr\nolimits_\mu(\F_{t-1}),~2p \cdot \Pr\nolimits_\pi(\F_{t-1})\right) \\
    &= \sum_{\F_{t-1}} \min\left(\frac{1}{2^{t-1}},~2p \cdot \Pr\nolimits_\pi(\F_{t-1})\right)
    .
\end{align}
\Cref{eq:memoryless} is justified because the conditional probability of a trajectory ending in action $a_1$ is just the probability of $\F_{t-1}$ under $\mu$, due to the 1-state (memoryless) MDP.
We can simplify $\smash{\beta^{(1)}_t}$ further by using the binomial theorem to calculate $\Pr_\pi(\F_{t-1}) = p^k (1-p)^{t-1-k}$, where $k \in [0,t-1]$ is the number of times $a_1$ is taken in $\F_{t-1}$.
There are $\binom{t-1}{k}$ trajectories with this same probability.
Therefore,
\begin{align}
    \beta^{(1)}_t
    = \sum_{\F_{t-1}} \min\left(\frac{1}{2^{t-1}},~2p \cdot \Pr\nolimits_\pi(\F_{t-1})\right)
    &= \sum_{k=0}^{t-1} \binom{t-1}{k} \min\left(\frac{1}{2^{t-1}},~2p \cdot p^k (1-p)^{t-1-k} \right)
    .
\end{align}
Likewise, we can compute $\beta^{(2)}_t$ by swapping $p$ and $1-p$ above.
Let $\odot$ denote element-wise multiplication.
Using the fact that $P_\mu^t = P_\mu$, $\forall~t \geq 1$, it follows that
\begin{equation}
    B_t
    = P_\mu^t \odot \begin{bmatrix}
        \beta^{(1)}_t & \beta^{(2)}_t\\
        \beta^{(1)}_t & \beta^{(2)}_t
    \end{bmatrix}
    = \frac{1}{2} \begin{bmatrix}
        \beta^{(1)}_t & \beta^{(2)}_t\\
        \beta^{(1)}_t & \beta^{(2)}_t
    \end{bmatrix}
    .
\end{equation}
Using a computer program to calculate $Z$, assuming that $p=0.6$ and $\gamma=0.94$, we obtain
\begin{equation}
    Z
    = \sum_{t=1}^\infty \gamma^t (B_{t-1} P_\pi - B_t)
    \approx \begin{bmatrix}
        0.704 & -0.436\\
        0.704 & -0.436
    \end{bmatrix}
    .
\end{equation}
Therefore, $\norm{Z} \approx 1.14$, which is not a contraction, and the norm continues to increase for $p > 0.6$ or $\gamma>0.94$.

\subsection{Counterexample~\ref{counterexample:binary}: On-Policy Binary Traces}
\label{app:counterexample_binary}

The policy $\pi$ is uniform random, so we have
\begin{align}
    P_\pi = \frac{1}{2} \begin{bmatrix}
        1 & 1\\
        1 & 1
    \end{bmatrix}
    .
\end{align}
Let $\odot$ denote element-wise multiplication.
Because $\beta_t=1$ only when the trajectory $\F_t$ terminates in $(s,a_1)$ and $\beta_t=0$ otherwise, and since $P_\pi^t = P_\pi$, $\forall~t \geq 1$, we also have
\begin{align}
    B_t
    = P_\pi^t \odot \begin{bmatrix}
        1 & 0\\
        1 & 0
    \end{bmatrix}
    = \frac{1}{2} \begin{bmatrix}
        1 & 0\\
        1 & 0
    \end{bmatrix}
    .
\end{align}
Using a computer program to calculate $Z$, assuming that $\gamma=\frac{2}{3}$, we obtain
\begin{equation}
    Z
    = \sum_{t=1}^\infty \gamma^t (B_{t-1} P_\pi - B_t)
    = \frac{1}{3} \begin{bmatrix}
        -1 & 2\\
        -1 & 2
    \end{bmatrix}
    .
\end{equation}
Therefore, $\norm{Z}=1$, which is not a contraction, and the norm continues to increase for $\gamma > \frac{2}{3}$.

\section{Implementation of Trajectory-Aware Eligibility Traces}
\label{app:implementation}

The implementation of trajectory-aware methods is closely related to that of backward-view TD($\lambda$) in the tabular setting \citep[see, e.g.,][Chapter~7.3]{sutton1998reinforcement}.
On each timestep, an environment interaction is conducted according to the behavior policy $\mu$.
Then, the eligibilities for previously visited state-action pairs are modified, the eligibility for the current state-action pair is incremented, and the current TD error is applied to all state-action pairs in proportion to their eligibilities.
The only difference in the trajectory-aware case is that the eligibilities are not modified by simply multiplying a constant decay factor $\gamma \lambda$.

Arbitrary, trajectory-dependent traces $\beta(\mathcal{F}_t)$, as studied in our theoretical results, can be complicated to implement.
This stems from the fact that the timestep $t$ in the $\mathcal{M}$ operator is defined \emph{relative} to when the updated state-action pair was taken.
In other words, each state-action pair $(S_k,A_k)$ ``disagrees'' on the start of the current trajectory, generating its update from the unique sub-trajectory $(S_k,A_k), \dots, (S_t,A_t)$.
Implementing coefficients of this form would be possible using the general update
\begin{equation}
    Q(S_k,A_k) \gets Q(S_k,A_k) + \alpha \gamma^{t-k} \beta((S_k,A_k), \dots, (S_t,A_t)) \delta^\pi_t
    ,
\end{equation}
where $\alpha \in (0,1]$ is the stepsize, but this would require repeatedly slicing the list of visited state-action pairs $(S_0,A_0), \dots, (S_t,A_t)$.
While this is certainly feasible, it does not easily accommodate vectorization or parallelization.

Fortunately, this level of generality is rarely needed in practice, and specific optimizations can be made depending on the functional form of $\beta$.
For example, Truncated IS defines $\beta$ to be a pure function of the IS estimate $\Pi_t$, which is useful because per-decision eligibility traces can be used to efficiently generate the IS estimates for every state-action pair visited during the episode.
We demonstrate how this can be done in pseudocode (see \Cref{algo:truncated_is}).

Recursive methods like Recursive Retrace and RBIS, where $\beta_t$ explicitly depends on $\beta_{t-1}$, require only two minor changes compared to \Cref{algo:truncated_is} for their implementations.
These changes, which we highlight in \textcolor{blue}{blue} for RBIS in \Cref{algo:rbis}, correspond to the fact that the dynamic array $Y$ is now used to store the previous trace $\beta_{t-1}$ rather than the previous IS estimate $\Pi_{t-1}$ at each timestep.
The computational requirements for the methods remain nearly identical.
The implementation for Recursive Retrace easily follows by changing line~10 of \Cref{algo:rbis} to
\begin{equation}
    Y(k) \gets \lambda \min(1,~Y(k) \cdot \rho_t)
    .
\end{equation}

\section{Additional Experiment Details and Results}
\label{app:additional_results}

We conducted a grid search to find the best stepsize $\alpha$ for every $\lambda$-value for the four off-policy methods we evaluated in the Bifurcated Gridworld (\Cref{sect:rbis}).
Using a training set of 1,000 trials, we searched over $\lambda \in \{0, 0.1, \dots, 1\}$ and $\alpha \in \{0.1,0.3,0.5,0.7,0.9\}$, for a total of 55 hyperparameter combinations.
At the start of each trial, the initial value function $Q$ was sampled from a zero-mean Gaussian distribution with standard deviation $\sigma=0.01$.
We trained each agent for 3,000 timesteps, allowing extra time to complete the final episode.
We then generated learning curves by plotting the 100-episode moving average of these returns as a function of the number of timesteps and calculated their AUCs.
In \Cref{table:hp_search}, we report the stepsize $\alpha$ that led to the highest average AUC for each $\lambda$-value.
Then, using a separate test set of 1,000 trials to avoid bias in the search results, these $\alpha$-values were used to generate the learning curves in \Cref{fig:learning_curves}.
The AUCs for these learning curves were finally used in the creation of the $\lambda$-sweep plot (\Cref{fig:lambda_sweep}).

In \Cref{fig:additional_envs}, we repeated this procedure for three additional, more complex gridworld topologies.
Like the Bifurcated Gridworld, these environments feature one or more bifurcations that make fast credit assignment imperative, as well as additional challenges such as multiple goal cells.
As before, the agent starts in S and receives $+1$ reward for taking any action in a goal, terminating the episode.
The results are qualitatively similar to \Cref{fig:lambda_sweep};
RBIS outperforms the other three methods by a significant margin over the left-hand portion of the $\lambda$-spectrum, and performs similarly to Retrace as $\lambda \to 1$.

\begin{table}[h]
    \centering
    \caption{
        The best stepsizes found by our grid search in the Bifurcated Gridworld.
    }
    \vspace{0.1in}
    \begin{tabular}{rccccccccccc}
        \toprule
        $\lambda$ & 0 & 0.1 & 0.2 & 0.3 & 0.4 & 0.5 & 0.6 & 0.7 & 0.8 & 0.9 & 1  \\
        \midrule
        Retrace            & 0.9 & 0.9 & 0.9 & 0.9 & 0.9 & 0.9 & 0.9 & 0.9 & 0.7 & 0.7 & 0.5 \\
        Truncated IS       & 0.9 & 0.9 & 0.9 & 0.9 & 0.9 & 0.9 & 0.7 & 0.5 & 0.5 & 0.5 & 0.3 \\
        Recursive Retrace  & 0.9 & 0.9 & 0.9 & 0.9 & 0.9 & 0.9 & 0.9 & 0.9 & 0.7 & 0.5 & 0.5 \\
        RBIS               & 0.9 & 0.9 & 0.9 & 0.9 & 0.9 & 0.7 & 0.7 & 0.7 & 0.7 & 0.7 & 0.5 \\
        \bottomrule
    \end{tabular}
    \label{table:hp_search}
\end{table}

\clearpage
\begin{figure}[ht]
    \centering
    \includegraphics[width=0.3\textwidth]{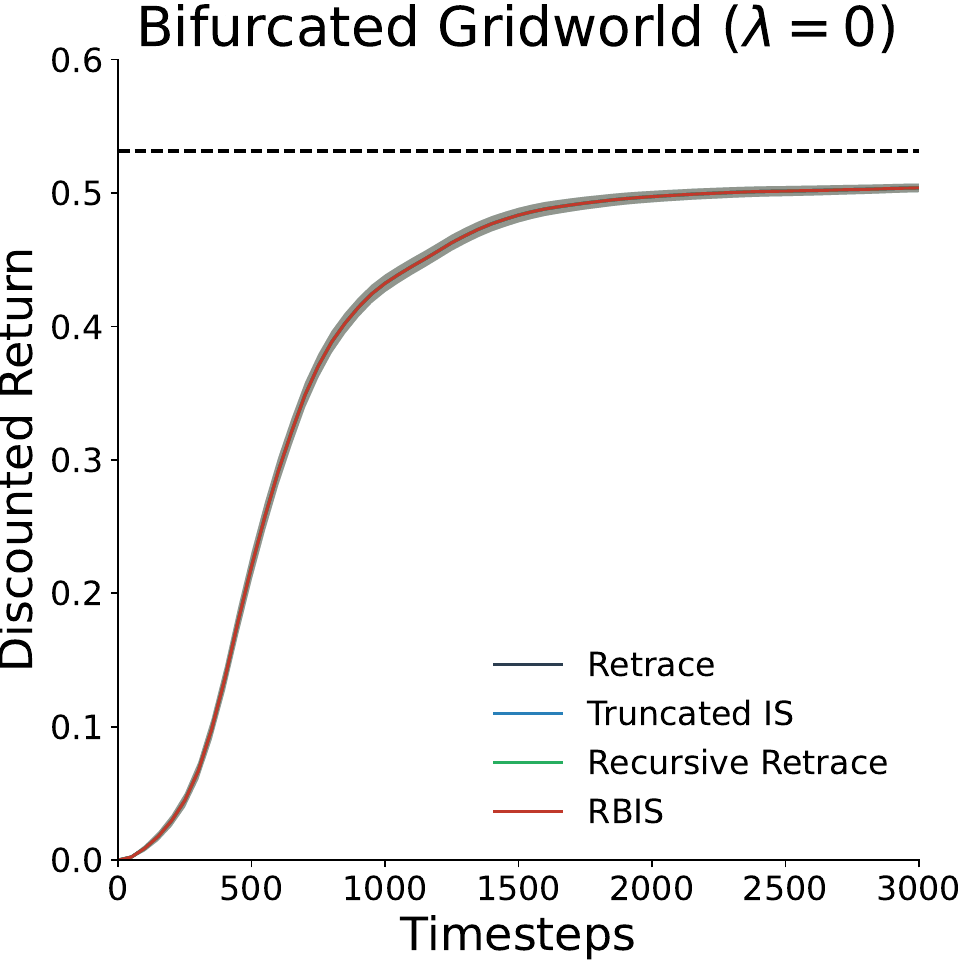}
    \hfill
    \includegraphics[width=0.3\textwidth]{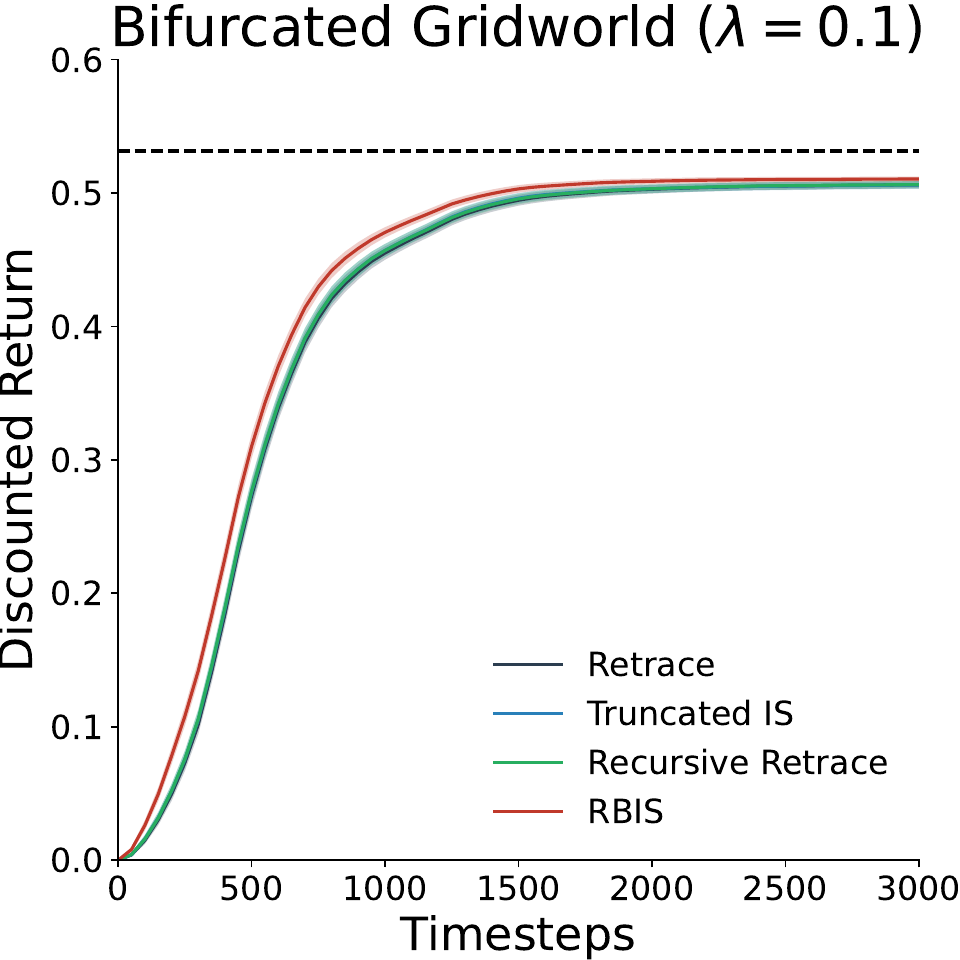}
    \hfill
    \includegraphics[width=0.3\textwidth]{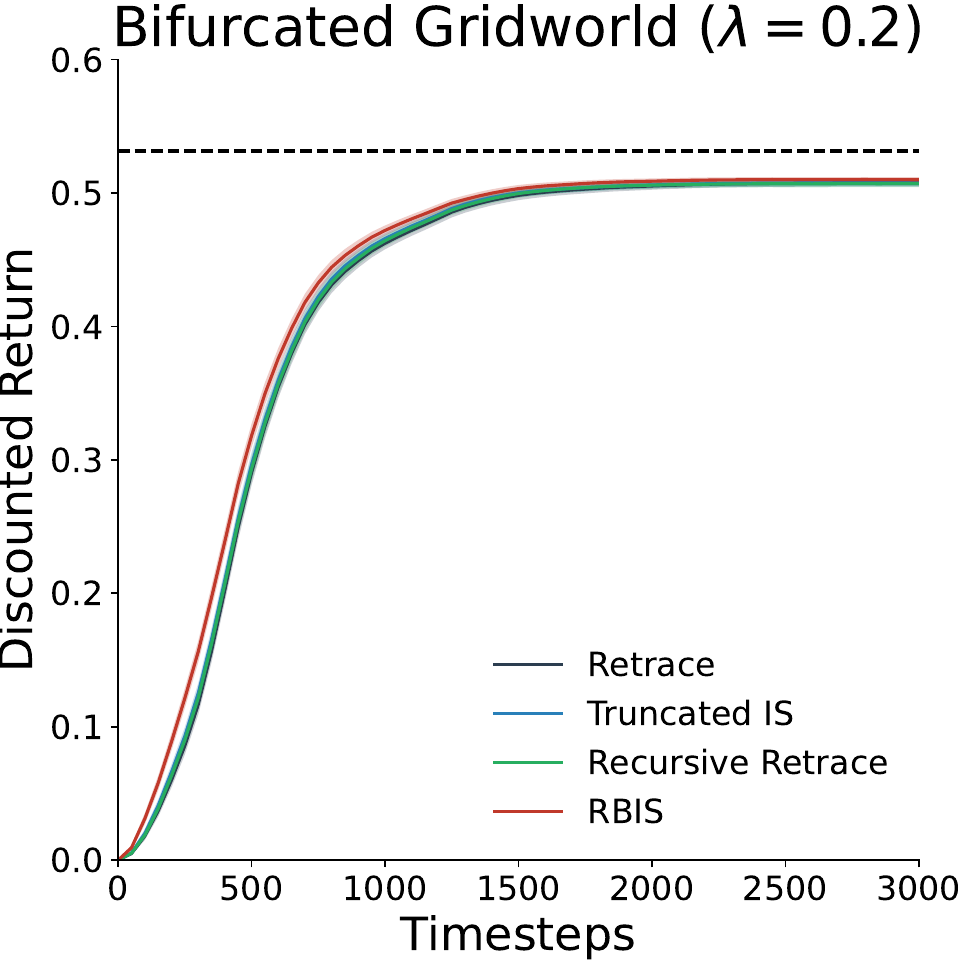}

    \vspace{0.2in}

    \includegraphics[width=0.3\textwidth]{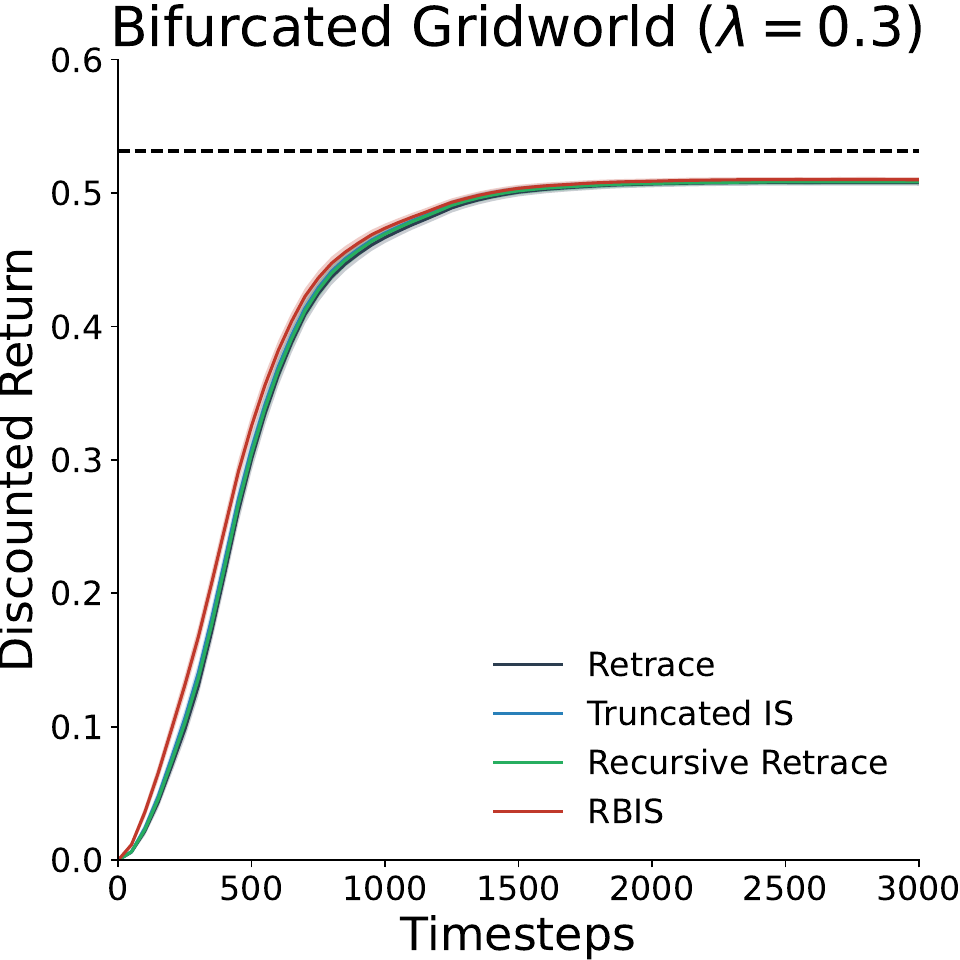}
    \hfill
    \includegraphics[width=0.3\textwidth]{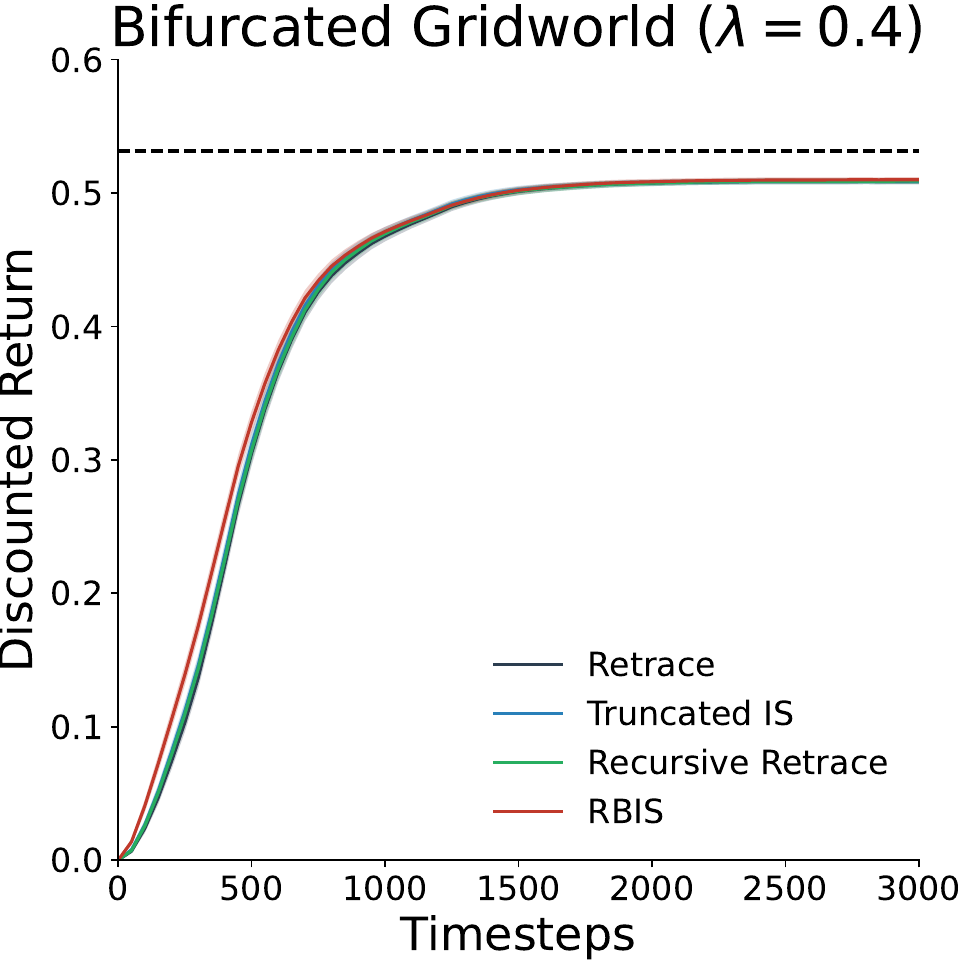}
    \hfill
    \includegraphics[width=0.3\textwidth]{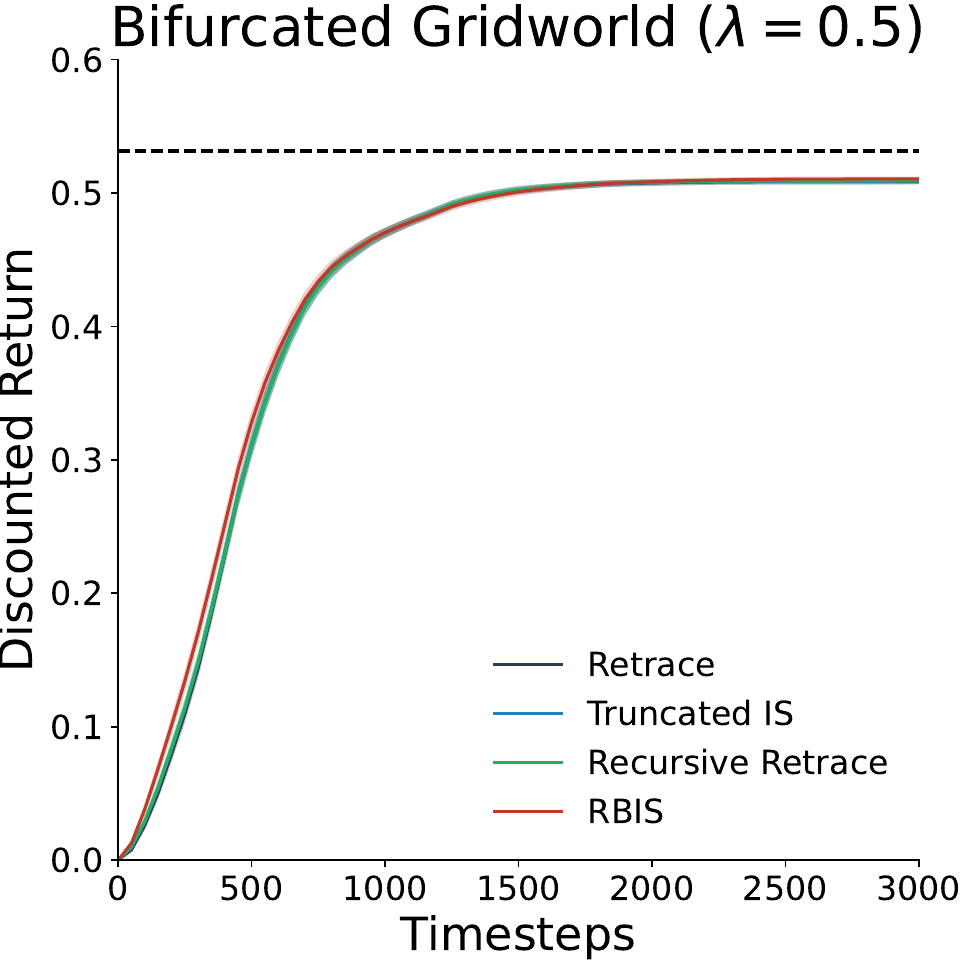}

    \vspace{0.2in}

    \includegraphics[width=0.3\textwidth]{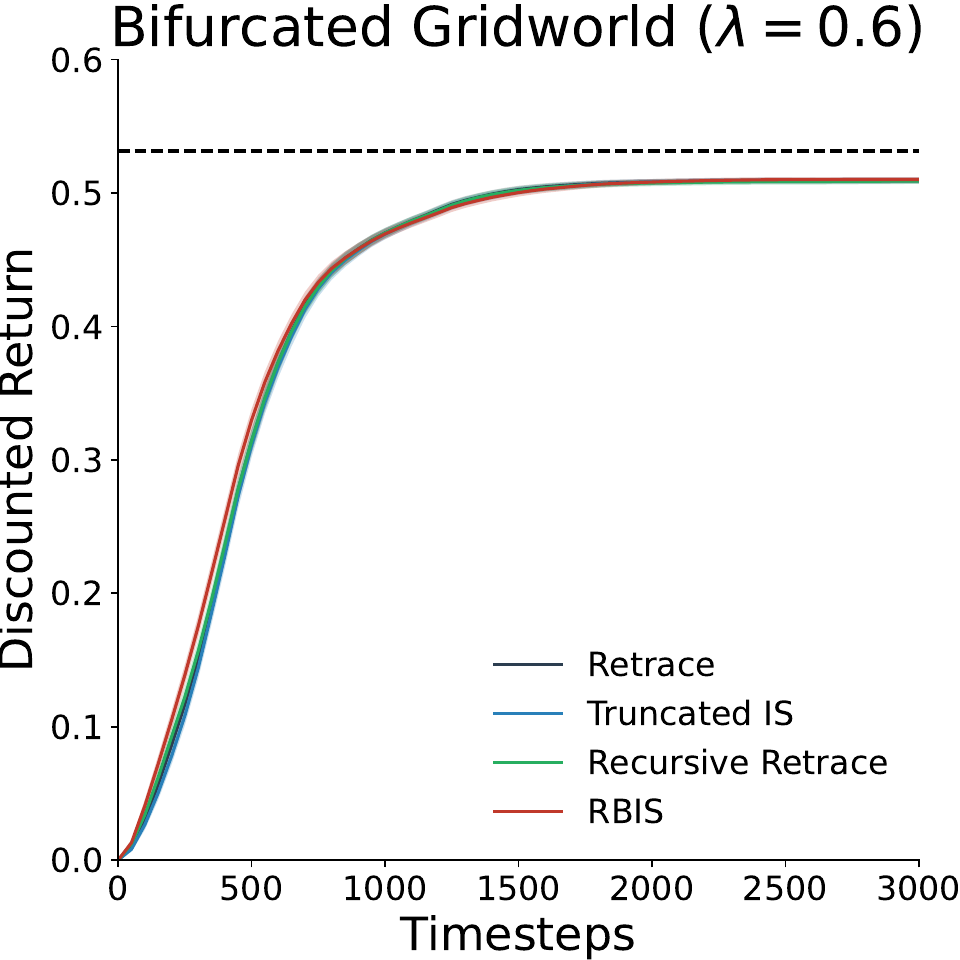}
    \hfill
    \includegraphics[width=0.3\textwidth]{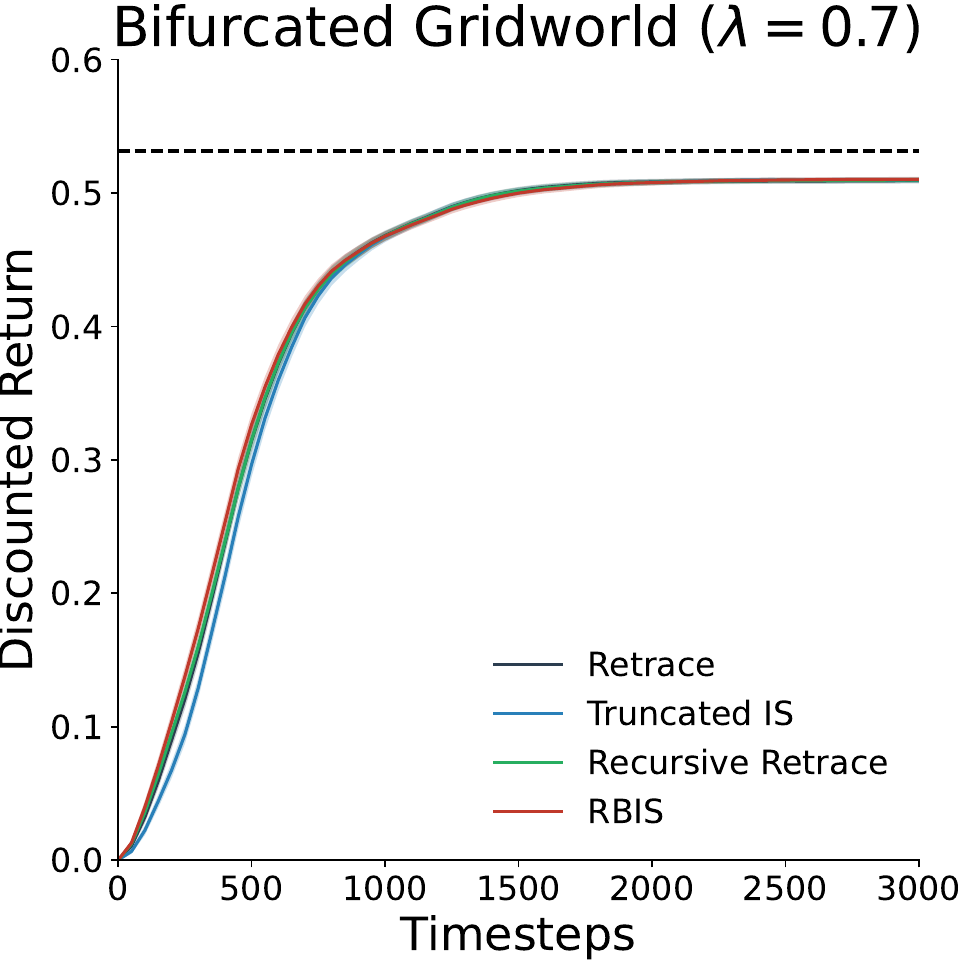}
    \hfill
    \includegraphics[width=0.3\textwidth]{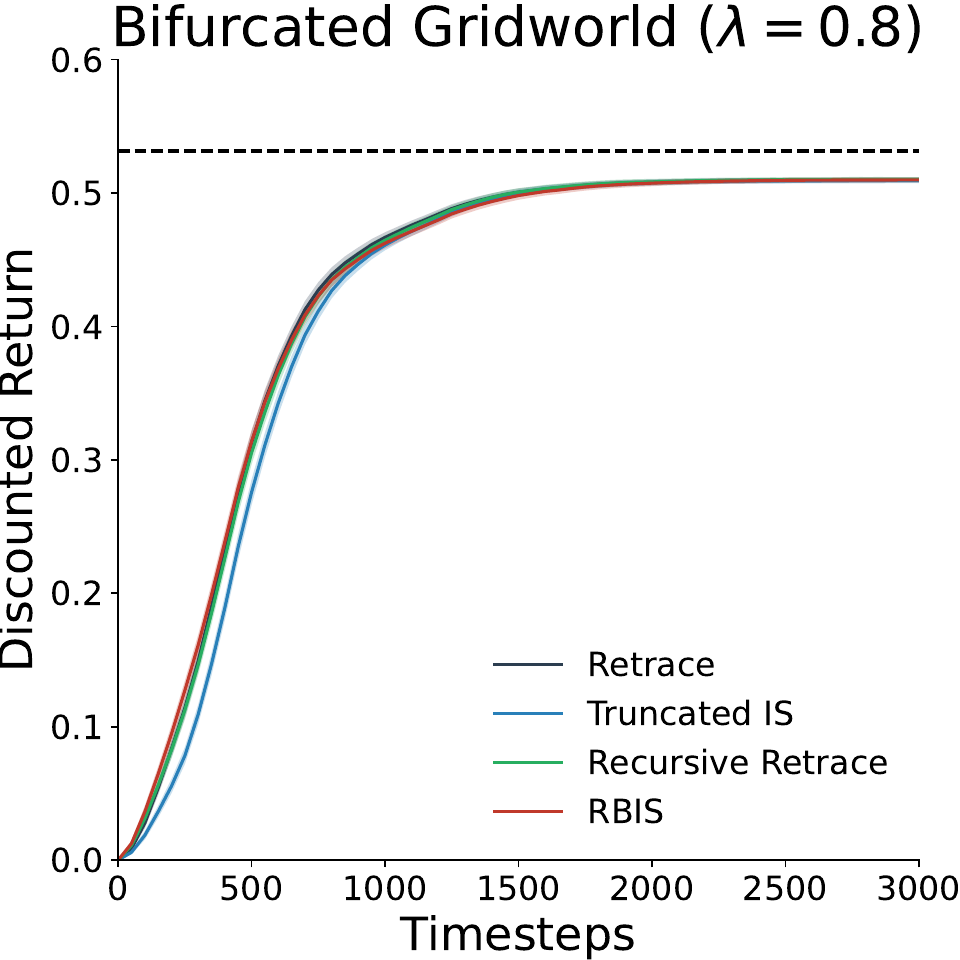}

    \vspace{0.2in}

    \hfill
    \includegraphics[width=0.3\textwidth]{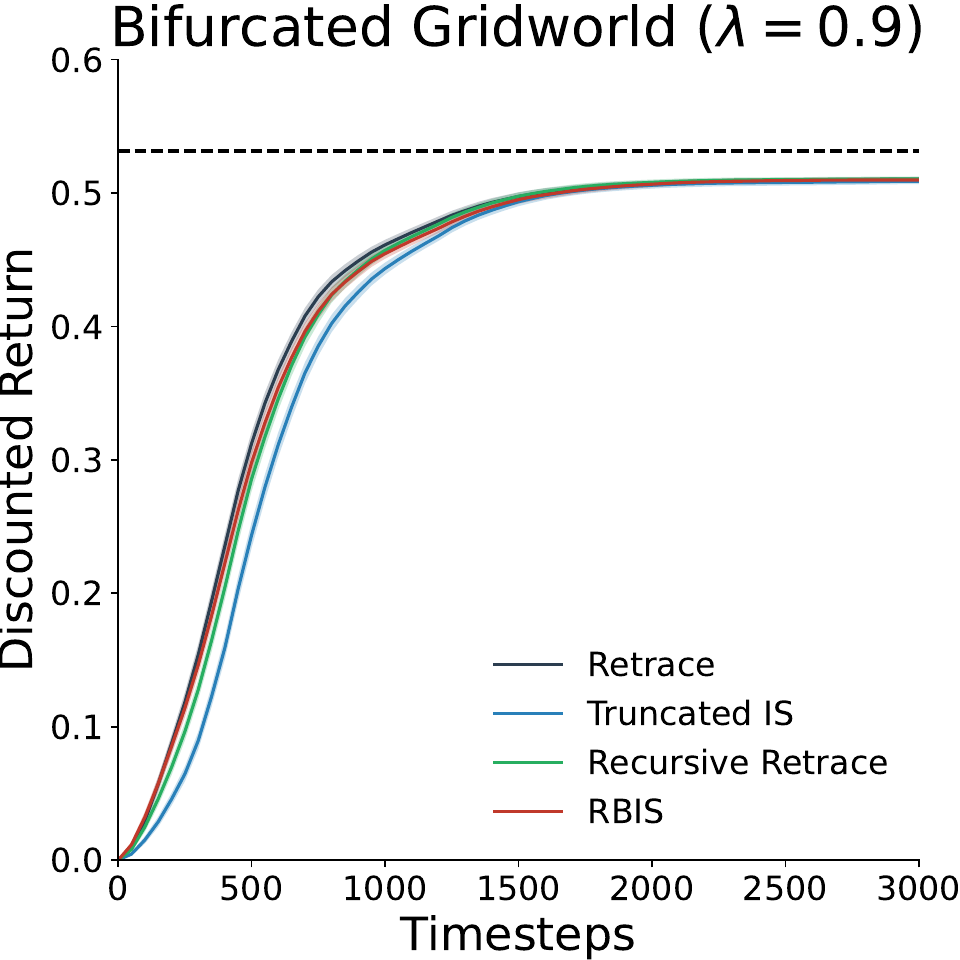}
    \hfill
    \includegraphics[width=0.3\textwidth]{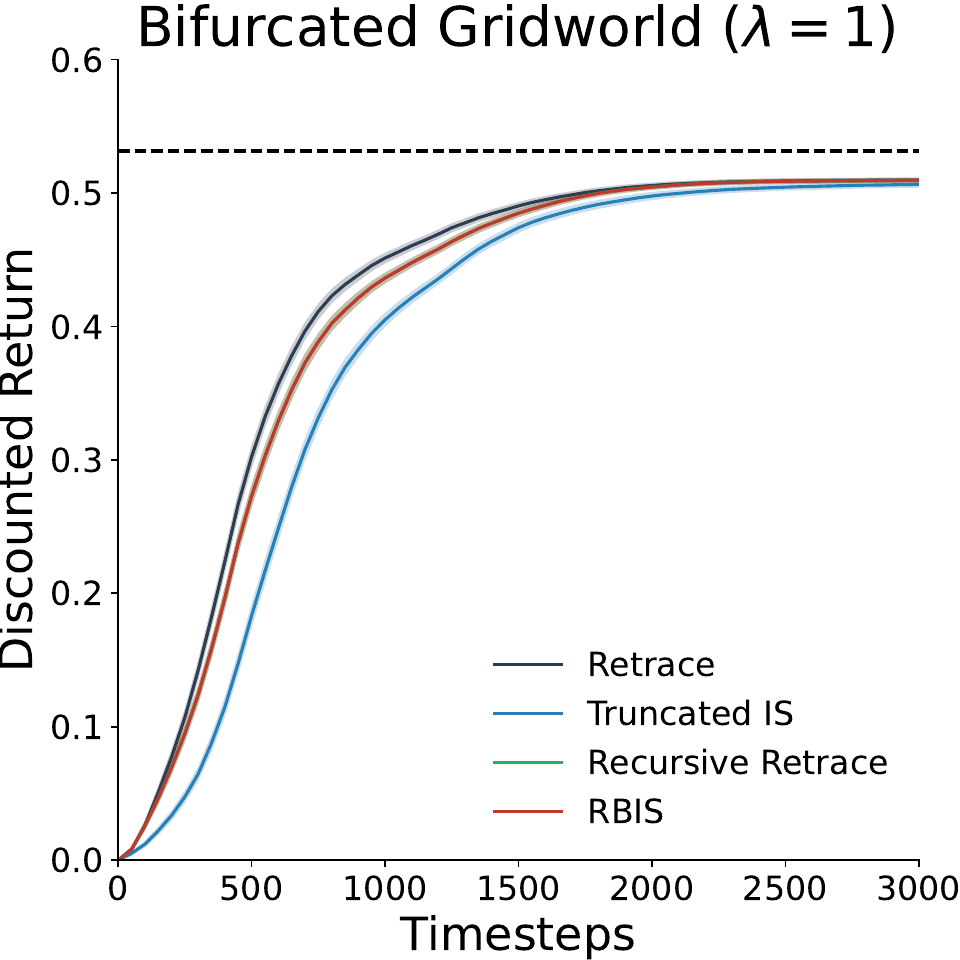}
    \hfill
    \caption{
        Learning curves for the $\lambda$-values we tested in the Bifurcated Gridworld environment.
        The dashed black line indicates the optimal discounted return for this problem.
    }
    \label{fig:learning_curves}
\end{figure}

\clearpage
\begin{figure}[ht]
    \centering
    \includegraphics[width=0.3\textwidth]{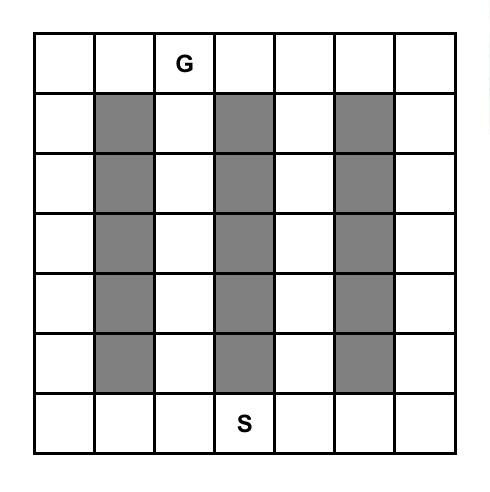}
    \hfill
    \includegraphics[width=0.65\textwidth]{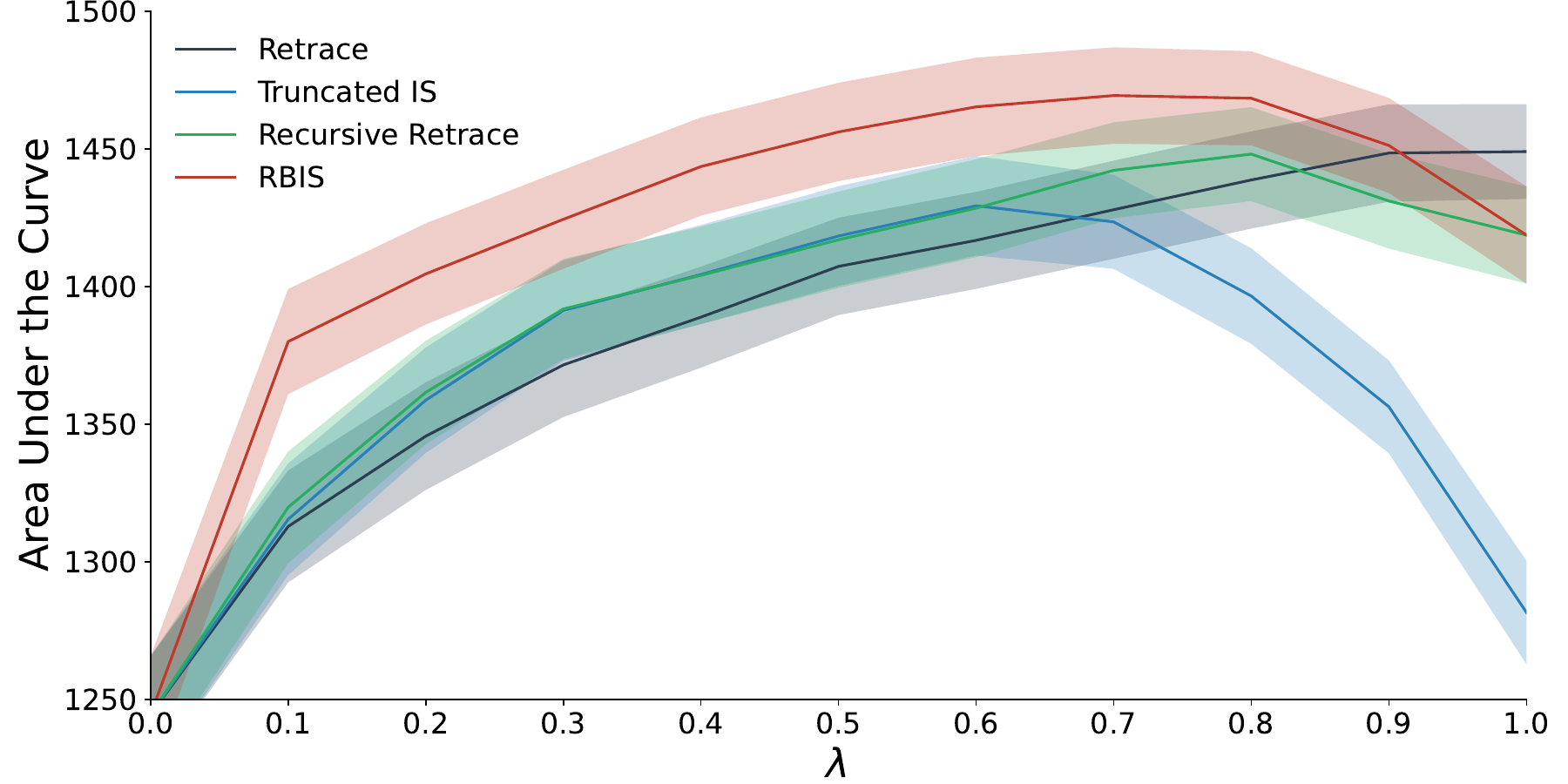}

    \vspace{0.2in}

    \includegraphics[width=0.3\textwidth]{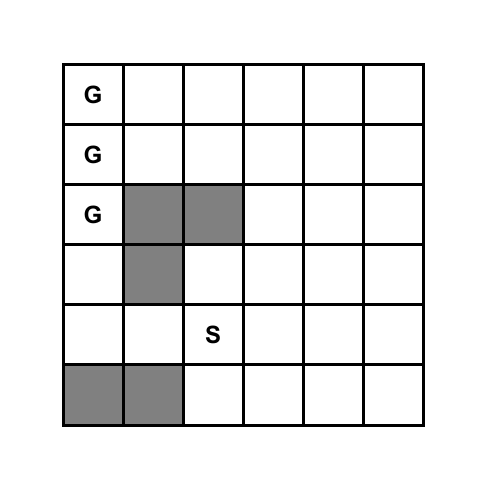}
    \hfill
    \includegraphics[width=0.65\textwidth]{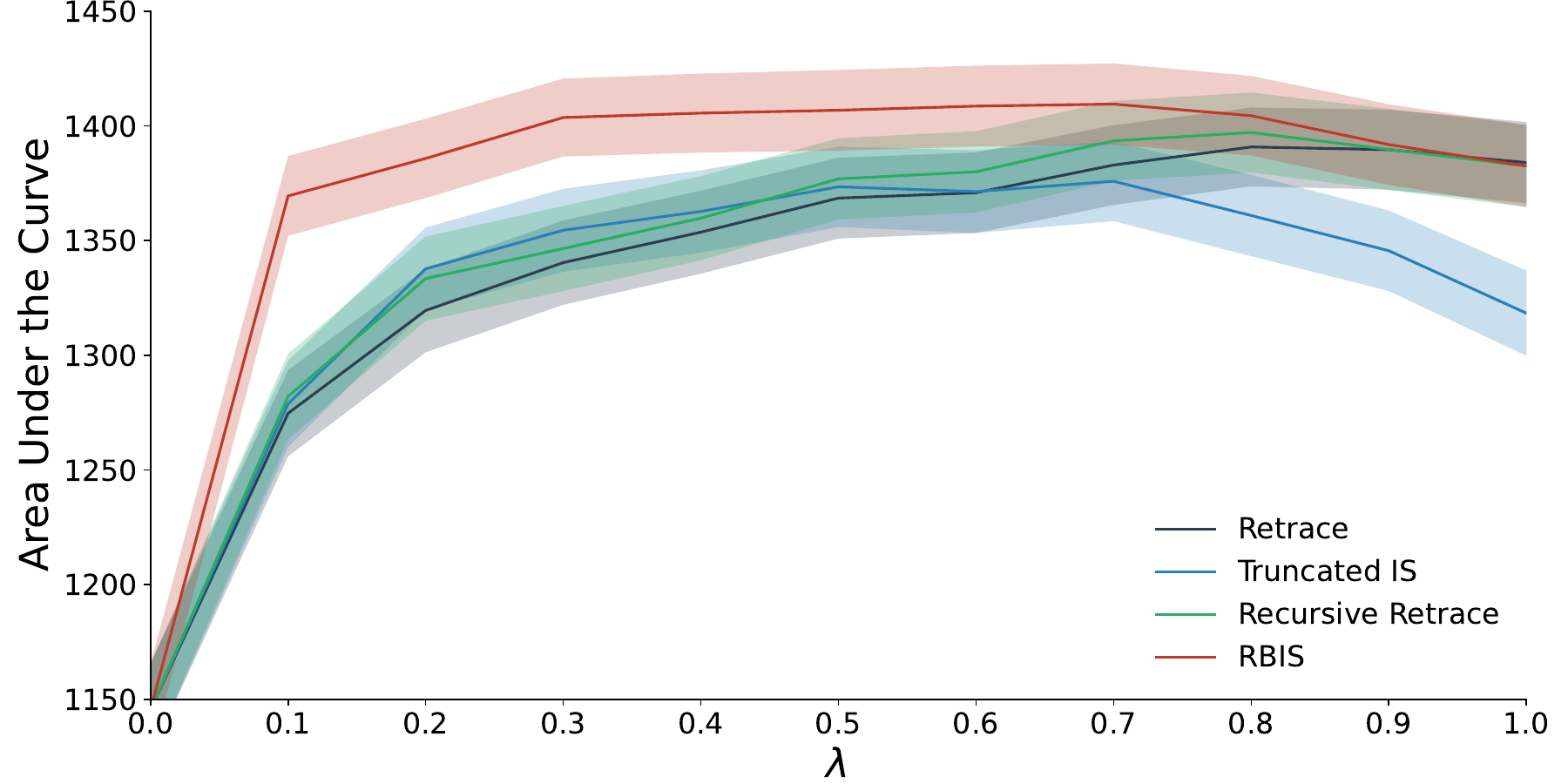}

    \vspace{0.2in}

    \includegraphics[width=0.3\textwidth]{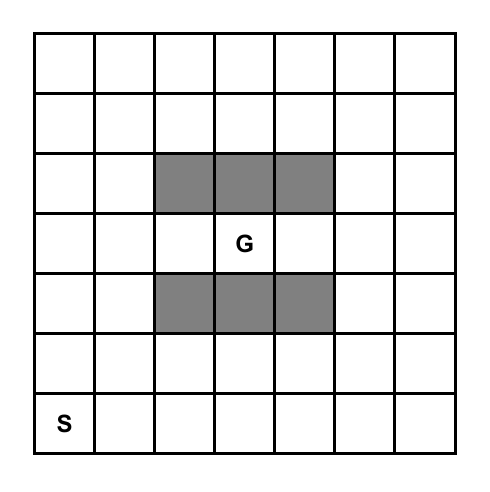}
    \hfill
    \includegraphics[width=0.65\textwidth]{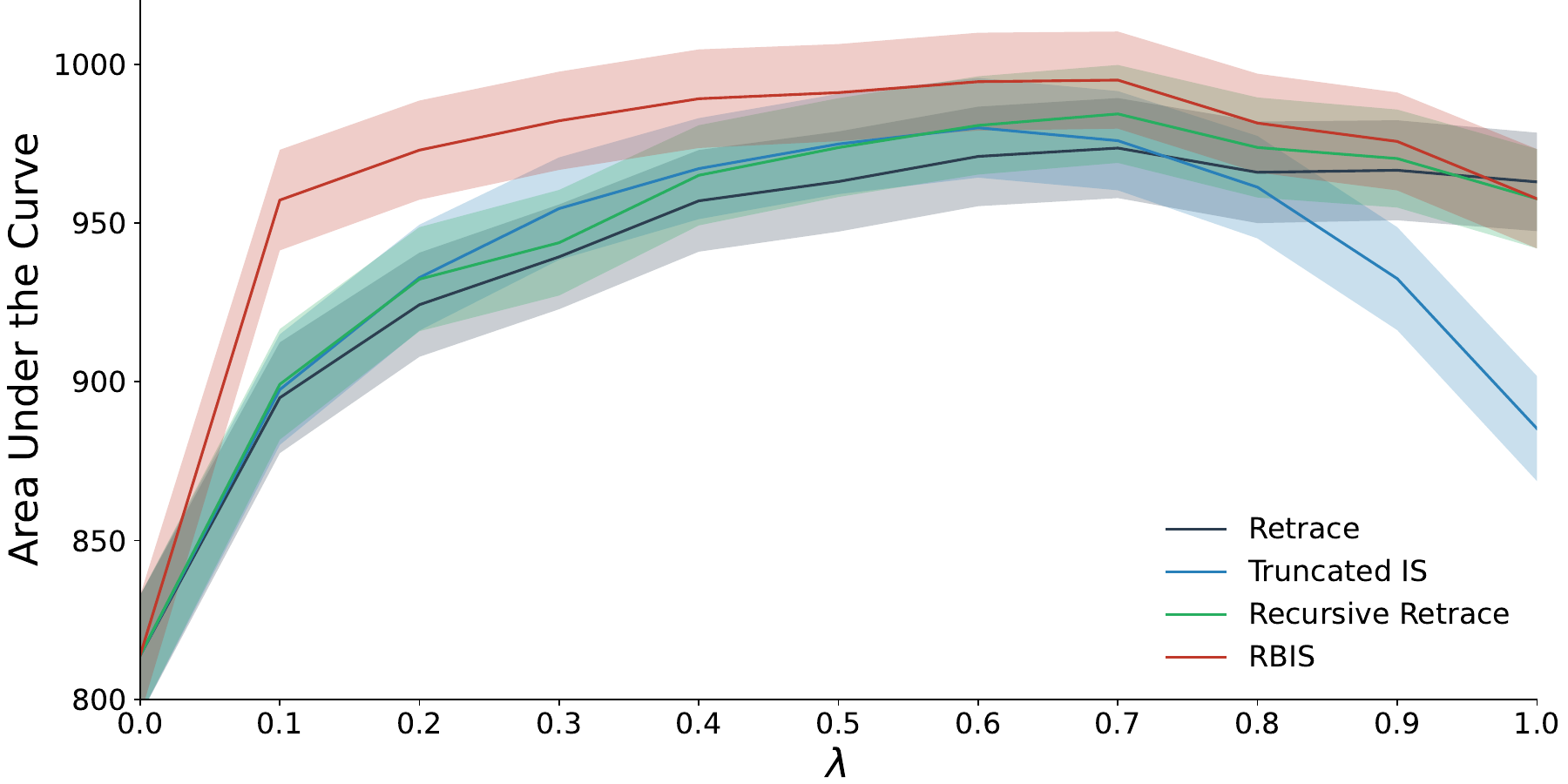}

    \caption{
        $\lambda$-sweeps conducted on three additional gridworld topologies.
        The experiment procedure was identical to that used in the creation of \Cref{fig:lambda_sweep}.
    }
    \label{fig:additional_envs}
\end{figure}

\clearpage
\begin{algorithm}[H]
    \algospacing
    \caption{Truncated Importance Sampling}
    \label{algo:truncated_is}
    \begin{algorithmic}[1]
        \STATE {\bfseries Input:} value function $Q$, stepsize $\alpha \in (0,1]$
        \FOR{each episode}
            \STATE Reset environment and observe state $S_0$
            \STATE Reset dynamic array $Y$
            \REPEAT[for $t = 0, 1, 2, \dots$]
                \STATE Take action $A_t \sim \mu(\cdot|S_t)$, receive reward $R_t$, and observe next state $S_{t+1}$
                \STATE $\rho_t = \frac{\pi(A_t|S_t)}{\mu(A_t|S_t)}$
                \STATE $\delta_t = \begin{cases}
                        R_t - Q(S_t, A_t) &\text{if $S_{t+1}$ is terminal}\\
                        R_t - Q(S_t, A_t) + \gamma \sum_{a' \in \mathcal{A}} \pi(a'|S_{t+1}) Q(S_{t+1}, a') &\text{else}
                    \end{cases}$
                \FOR{$k = 0, \dots, t-1$}
                    \STATE $Y(k) \gets Y(k) \cdot \rho_t$
                \ENDFOR
                \STATE $Y(t) \gets 1$
                \FOR{$k = 0, \dots, t$}
                    \STATE $z \gets (\gamma \lambda)^{t-k} \min(1,~Y(k))$
                    \STATE $Q(S_k,A_k) \gets Q(S_k,A_k) + \alpha z \delta_t$
                \ENDFOR
            \UNTIL{$S_{t+1}$ is terminal}
        \ENDFOR
    \end{algorithmic}
\end{algorithm}

\begin{algorithm}[H]
    \algospacing
    \caption{Recency-Bounded Importance Sampling (RBIS)}
    \label{algo:rbis}
    \begin{algorithmic}[1]
        \STATE {\bfseries Input:} value function $Q$, stepsize $\alpha \in (0,1]$
        \FOR{each episode}
            \STATE Reset environment and observe state $S_0$
            \STATE Reset dynamic array $Y$
            \REPEAT[for $t = 0, 1, 2, \dots$]
                \STATE Take action $A_t \sim \mu(\cdot|S_t)$, receive reward $R_t$, and observe next state $S_{t+1}$
                \STATE $\rho_t = \frac{\pi(A_t|S_t)}{\mu(A_t|S_t)}$
                \STATE $\delta_t = \begin{cases}
                        R_t - Q(S_t, A_t) &\text{if $S_{t+1}$ is terminal}\\
                        R_t - Q(S_t, A_t) + \gamma \sum_{a' \in \mathcal{A}} \pi(a'|S_{t+1}) Q(S_{t+1}, a') &\text{else}
                    \end{cases}$
                \FOR{$k = 0, \dots, t-1$}
                    \STATE \textcolor{blue}{$Y(k) \gets \min(\lambda^{t-k},~Y(k) \cdot \rho_t)$}
                \ENDFOR
                \STATE $Y(t) \gets 1$
                \FOR{$k = 0, \dots, t$}
                    \STATE \textcolor{blue}{$z \gets \gamma^{t-k} Y(k)$}
                    \STATE $Q(S_k,A_k) \gets Q(S_k,A_k) + \alpha z \delta_t$
                \ENDFOR
            \UNTIL{$S_{t+1}$ is terminal}
        \ENDFOR
    \end{algorithmic}
\end{algorithm}


\end{document}